\RequirePackage{fix-cm}
\documentclass[twocolumn]{svjour3_arxiv}          
\pdfoutput=1
\smartqed  
%
\usepackage{graphicx}
\usepackage[cmex10]{amsmath}
\usepackage[tight,footnotesize]{subfigure}
\usepackage{natbib}
\usepackage{times}
\usepackage{color}
%

\newcommand{\bbm}{\begin{bmatrix}}
\newcommand{\ebm}{\end{bmatrix}}
\DeclareMathAlphabet{\mbfa}{OT1}{ptm}{b}{n}
\newcommand{\mbf}[1]{\mbfa{#1}} 
\newcommand{\mbs}[1]{{\boldsymbol{#1}}}

\newcommand{\diag}{\textup{diag}}

\DeclareMathOperator{\tr}{tr}


\newcommand{\mbsdot}[1]{{\dot{\boldsymbol{#1}}}}

\newcommand{\mbfdel}[1]{{\delta{\mbf{#1}}}}
\newcommand{\mbfdot}[1]{{\dot{\mbf{#1}}}}

\newcommand{\mbfhat}[1]{{\hat{\mbf{#1}}}}
\newcommand{\mbfcheck}[1]{{\check{\mbf{#1}}}}

\newcommand{\primean}{\mbfcheck{x}}
\newcommand{\pricov}{\mbfcheck{P}}
\newcommand{\opmean}{\mbf{x}_{\text{op}}}
\newcommand{\postmean}{\mbfhat{x}}
\newcommand{\postcov}{\mbfhat{P}}
\newcommand{\sdemat}{\mbf{F}}
\newcommand{\ik}{n}
\newcommand{\iK}{N}
\newcommand{\ikf}{k}
\newcommand{\iKf}{K}

\newcommand{\iQry}{J}
\newcommand{\ihy}{w}
\newcommand{\iHy}{W}
%
\newenvironment{tolxemerg}[2]{%
	\par
	\ifx\empty#1\empty\else\tolerance=#1\relax\fi
	\ifx\empty#2\empty\else\emergencystretch=#2\relax\fi
}{%
\par
}
\hyphenation{aero-space} \hyphenation{auton-omous}
\hyphenation{time-stamp} \hyphenation{time-stamps}
%
\journalname{Autonomous Robots}

\begin{document}

\title{Batch Nonlinear Continuous-Time Trajectory Estimation as Exactly Sparse Gaussian Process Regression}

\author{Sean Anderson         \and
        Timothy D. Barfoot    \and
        Chi Hay Tong          \and
        Simo S\"{a}rkk\"{a}
}

\institute{S. Anderson \at
              Autonomous Space Robotics Lab, University of Toronto Institute for Aerospace Studies, Canada \\
              \email{sean.anderson@mail.utoronto.ca}
           \and
           T. D. Barfoot \at
	          Autonomous Space Robotics Lab, University of Toronto Institute for Aerospace Studies, Canada \\
              \email{tim.barfoot@utoronto.ca}
           \and
           C. H. Tong \at
              Mobile Robotics Group, University of Oxford, United Kingdom \\
              \email{chi@robots.ox.ac.uk}
           \and
           S. S\"{a}rkk\"{a} \at
              Department of Biomedical Engineering and Computational Science, Aalto University, Finland \\
              \email{simo.sarkka@aalto.fi}
}

\date{Received: 20 November 2014}

\maketitle

\begin{abstract}
\begin{tolxemerg}{}{2pt}
In this paper, we revisit batch state estimation through the lens of {\em Gaussian process} (GP) regression.  We consider continuous-discrete estimation problems wherein a trajectory is viewed as a one-dimensional GP, with time as the independent variable.  Our continuous-time prior can be defined by any nonlinear, time-varying stochastic differential equation driven by white noise; this allows the possibility of smoothing our trajectory estimates using a variety of vehicle dynamics models (e.g., `constant-velocity'). We show that this class of prior results in an inverse kernel matrix (i.e., covariance matrix between all pairs of measurement times) that is exactly sparse (block-tridiagonal) and that this can be exploited to carry out GP regression (and interpolation) very efficiently.  
When the prior is based on a linear, time-varying stochastic differential equation and the measurement model is also linear, this GP approach is equivalent to classical, discrete-time smoothing (at the measurement times); when a nonlinearity is present, we iterate over the whole trajectory to maximize accuracy.
We test the approach experimentally on a simultaneous trajectory estimation and mapping problem using a mobile robot dataset.
\end{tolxemerg}
\keywords{State Estimation \and Localization \and Continuous Time \and Gaussian Process Regression}
\end{abstract}

\section{Introduction}
\label{sec:introduction}

Probabilistic state estimation has been a core topic in mobile robotics since the 1980s \citep{durrantwhyte88,smith86,smith90}, often as part of  the {\em simultaneous localization and mapping} (SLAM) problem \citep{durrantwhyte06b,durrantwhyte06a}.
Early work in estimation theory focused on recursive (as opposed to batch) formulations \citep{kalman60}, and this was mirrored in the formulation of SLAM as a filtering problem \citep{smith90}.  However, despite the fact that continuous-time estimation techniques have been available since the 1960s \citep{jazwinski70,kalman61}, trajectory estimation for mobile robots has been formulated almost exclusively in discrete time.  

\begin{tolxemerg}{}{2pt}
\citet{lu97} showed how to formulate SLAM as a batch estimation problem incorporating both odometry measurements (to smooth solutions) as well as landmark measurements.  This can be viewed as a generalization of {\em bundle adjustment} \citep{brown58,sibley10}, which did not incorporate odometry.  Today, batch approaches in mobile robotics are commonplace (e.g., GraphSLAM by \citet{thrun06}).  \citet{kaess08} show how batch solutions can be efficiently updated as new measurements are gathered and \citet{strasdat10} show that batch methods are able to achieve higher accuracy than their filtering counterparts, for the same computational cost.   Most of these results are formulated in discrete time.
\end{tolxemerg}

\begin{figure}
	\includegraphics[width=\columnwidth]{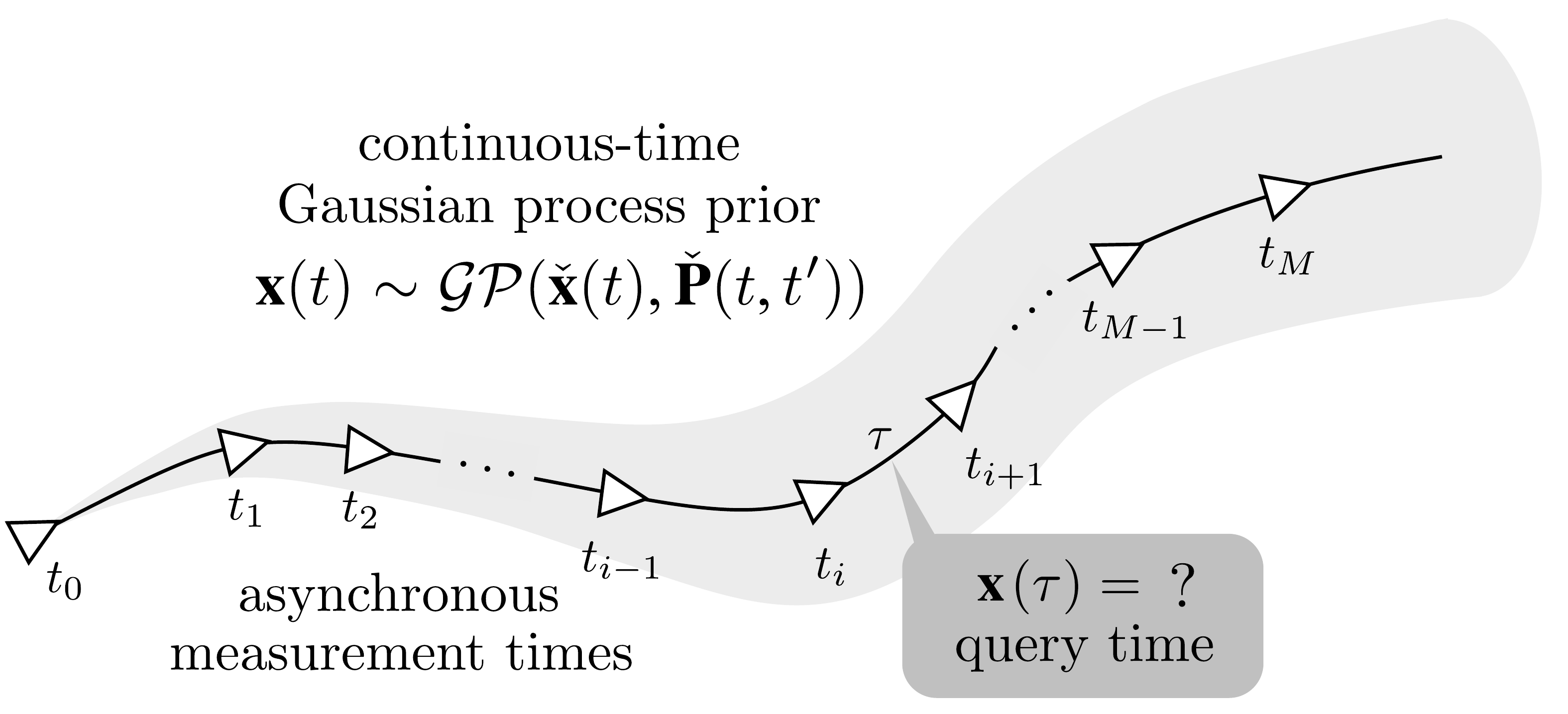}\\
	\caption{To carry out batch trajectory estimation, we use GP regression with a smooth, continuous-time prior and discrete-time measurements.  This allows us to query the trajectory at any time of interest, $\tau$.  }
	\label{fig:setup}
\end{figure}

\begin{tolxemerg}{}{2pt}
Discrete-time representations of robot trajectories are sufficient in many situations, but they do not work well when estimating motion from certain types of sensors (e.g., rolling-shutter cameras and scanning laser-rangefinders) and sensor combinations (e.g., high datarate, asynchronous).  In these cases, a smooth, continuous-time representation of the trajectory is more suitable. For example, in the case of estimating motion from a scanning-while-moving  sensor, a discrete-time approach (with no motion prior) can fail to find a unique solution; something is needed to  tie together the observations acquired at many unique timestamps.   Additional sensors (e.g., odometry or inertial measurements) could be introduced to serve in this role, but this may not always be possible. In these cases, a motion prior can be used instead (or as well), which is most naturally expressed in continuous time.
\end{tolxemerg}

\begin{tolxemerg}{}{2pt}
One approach to continuous-time trajectory representation is to use interpolation (e.g., linear, spline) directly between nearby discrete poses \citep{bibby10,bosse09,dong_fsr12,furgale_icra12,hedborg12,lovegrove13}.  Instead, we choose to represent the trajectory nonparametrically as a one-dimensional Gaussian process (GP) \citep{rasmussen06}, with time as the independent variable (see Figure~\ref{fig:setup}). \citet{tong_crv12,tong_ijrr13b} show that querying the state of the robot at a time of interest can be viewed as a nonlinear, GP regression problem.  While their approach is very general, allowing a variety of GP priors over robot trajectories, it is also quite expensive due to the need to invert a large, dense kernel matrix. 
\end{tolxemerg}

While GPs have been used in robotic state estimation to accomplish dimensionality reduction \citep{ferris06,ferris07,lawrence03} and to represent the measurement and motion models \citep{deisenroth12,ko09,ko11}, these uses are quite different than representing the latent robot trajectory as a GP \citep{tong_crv12,tong_ijrr13b}.

\begin{tolxemerg}{}{2pt}
In this paper, we consider a particular class of GPs, generated by nonlinear, time-varying (NTV) stochastic differential equations (SDE) driven by white noise.
We first show that GPs based on linear, time-varying (LTV) SDEs have an inverse kernel matrix that is {\em exactly sparse} (block-tridiagonal) and can be derived in closed form;
an approximation for GPs based on a NTV SDE is then shown that results in the same sparsity properties as the linear case.
Concentrating on this class of covariance functions results in only a minor loss of generality, because many commonly used covariance functions such the Mat\'{e}rn class and the squared exponential covariance function can be exactly or approximately represented as linear SDEs \citep{hartikainen10,sarkka13,solin14}.  We provide an example of this relationship at the end of this paper.  The resulting sparsity allows the approach of \citet{tong_crv12,tong_ijrr13b} to be implemented very efficiently.   The intuition behind why this is possible is that the state we are estimating is {\em Markovian} for this class of GPs, which implies that the corresponding precision matrices are sparse \citep{lindgren11}.
\end{tolxemerg}

\begin{tolxemerg}{300}{2pt}
This sparsity property has been exploited in estimation theory to allow recursive methods (both filtering and smoothing) since the 1960s \citep{kalman60,kalman61}. 
\citet{jumarie_90} offers an interesting discussion on nonlinear, continuous-time filtering (using both a nonlinear dynamical plant and nonlinear observations). 
The tracking literature, in particular, has made heavy use of motion priors (in both continuous and discrete time) and has exploited the Markov property for efficient solutions \citep{maybeck79}.  
For the nonlinear, discrete-time case, \citet{bell_94} shows that Kalman filtering and smoothing with iterated relinearization is equivalent to Gauss-Newton on the full-state trajectory.  It is of no surprise that in vision and robotics,  discrete-time batch methods commonly exploit this sparsity property as well \citep{triggs00}.  In this paper, we make the (retrospectively obvious) observation that this sparsity can also be exploited in a batch, continuous-time context.
The result is that we derive a principled method to construct trajectory-smoothing terms for batch optimization (or factors in a factor-graph representation) based on a class of useful motion models; this paves the way to incorporate vehicle dynamics models, including exogenous inputs, to help with trajectory estimation.
\end{tolxemerg}

\begin{tolxemerg}{}{2pt}
Therefore, our main contribution is to emphasize the strong connection between classical estimation theory and machine learning via GP regression.  We use the fact that the inverse kernel matrix is sparse for a class of useful GP priors \citep{lindgren11,sarkka13} in a new way to efficiently implement nonlinear, GP regression for batch, continuous-time trajectory estimation.  We also show that this naturally leads to a subtle generalization of SLAM that we call {\em simultaneous trajectory estimation and mapping} (STEAM), with the difference being that chains of discrete {\em poses} are replaced with {\em Markovian trajectories} in order to incorporate continuous-time motion priors in an efficient way.  Finally, by using this GP paradigm, we are able to exploit the classic GP interpolation approach to query the trajectory at any time of interest in an efficient manner.
\end{tolxemerg}

\begin{tolxemerg}{300}{4pt}
This ability to query the trajectory at any time of interest in a principled way could be useful in a variety of situations.  For example, \citet{newman09} mapped a large urban area using a combination of stereo vision and laser rangefinders; the motion was estimated using the camera and the laser data were subsequently placed into a three-dimensional map based on this estimated motion.  Our method could provide a seamless means to (i) estimate the camera trajectory and then (ii) query this trajectory at every laser acquisition time.
\end{tolxemerg}

\begin{tolxemerg}{300}{4pt}
This paper is a significant extension of our recent conference paper \citep{barfoot_rss14}.
We build upon the exactly-sparse GP-regression approach that used \emph{linear}, time-varying SDEs, and show how to use GPs based on \emph{nonlinear}, time-varying SDEs, while maintaining the same level of sparsity as the linear case.
The algorithmic differences are discussed in detail and results are provided using comparable linear and nonlinear priors.
Furthermore, this paper shows how the block-tridiagonal sparsity of the kernel matrix can be exploited to improve the computational performance of hyperparameter training.
Finally, discussion is provided on using the GP interpolation equation for further state reduction at the cost of accuracy.
\end{tolxemerg}

The paper is organized as follows.  Section~\ref{sec:gpr} summarizes the general approach to batch state estimation via GP regression.  Section~\ref{sec:ltvsde} describes the particular class of GPs we use and elaborates on our main result concerning sparsity.  Section~\ref{sec:experiment} demonstrates this main result on a mobile robot example using a `constant-velocity' prior and compares the computational cost to methods that do not exploit the sparsity.  Section~\ref{sec:discussion} provides some discussion and Section~\ref{sec:conclusion} concludes the paper.

\section{Gaussian Process Regression}
\label{sec:gpr}

We take a {\em Gaussian-process-regression} approach to state estimation.  This allows us to (i) represent trajectories in continuous time (and therefore query the solution at any time of interest), and (ii) optimize our solution by iterating over the entire trajectory (recursive methods typically iterate at a single timestep).  

\begin{tolxemerg}{300}{4pt}
We will consider systems with a continuous-time, GP process model and a discrete-time, nonlinear measurement model:
\begin{eqnarray}
\mbf{x}(t) & \sim & \mathcal{GP}( \primean(t), \pricov(t,t^\prime) ), \qquad t_0 < t,t^\prime \\
\mbf{y}_\ik & = & \mbf{g}(\mbf{x}(t_\ik)) + \mbf{n}_\ik, \qquad  t_1 < \cdots < t_\iK,
\end{eqnarray}
where $\mbf{x}(t)$ is the state, $\primean(t)$ is the mean function, $\pricov(t,t^\prime)$ is the covariance function, $\mbf{y}_\ik$ are measurements, $\mbf{n}_\ik \sim \mathcal{N}(\mbf{0}, \mbf{R}_\ik )$ is Gaussian measurement noise, $\mbf{g}(\cdot)$ is a nonlinear measurement model, and $t_1 < \ldots < t_\iK$ is a sequence of measurement times.  For the moment, we do not consider the STEAM problem (i.e., the state does not include landmarks), but we will return to this case in our example later.
\end{tolxemerg}

We follow the approach of \citet{tong_ijrr13b} to set up our batch, GP state estimation problem.  We will first assume that we want to query the state at the measurement times, and will return to querying at other times later on.  
We start with an initial guess, $\opmean$, for the trajectory that will be improved iteratively.  At each iteration, we solve for the optimal perturbation, $\delta\mbf{x}^\star$, to our guess using GP regression, with our measurement model linearized about the current best guess.  

\begin{tolxemerg}{}{4pt}
The joint likelihood between the state and the measurements (both at the measurement times) is 
\begin{equation}
p\left( \bbm \mbf{x} \\ \mbf{y} \ebm \right) 
= \mathcal{N} \left( \bbm  \primean \\  \mbf{g} + \mbf{G} (\primean - \opmean)  \ebm, \bbm \pricov\; &  \pricov \mbf{G}^T \\  \mbf{G} \pricov\; &   \mbf{G} \pricov\mbf{G}^T  + \mbf{R}  \ebm  \right),
\end{equation}
where
\begin{small}
	\begin{gather*}
	\mbf{x} = \bbm \mbf{x}(t_0) \\ \vdots \\ \mbf{x}(t_\iK) \ebm, \quad \opmean = \bbm \opmean(t_0) \\ \vdots \\ \opmean(t_\iK) \ebm, \quad \primean = \bbm \primean(t_0)  \\ \vdots \\ \primean(t_\iK) \ebm, \\
	\mbf{y} = \bbm \mbf{y}_1 \\ \vdots \\ \mbf{y}_\iK \ebm,  \quad \mbf{g} = \bbm \mbf{g}(\opmean(t_1)) \\ \vdots \\ \mbf{g}(\opmean(t_\iK)) \ebm,  \quad \mbf{G} = \left. \frac{\partial \mbf{g}}{\partial \mbf{x}} \right|_{\opmean}, \\ \mbf{R} = \mbox{diag}\left( \mbf{R}_1, \ldots, \mbf{R}_\iK \right), \quad \pricov = \bbm \pricov(t_i, t_j) \ebm_{ij}.
	\end{gather*}%
\end{small}%
Note, the measurement model is linearized about our best guess so far.
We then have that the Gaussian posterior is
\begin{multline}
p(\mbf{x} | \mbf{y}) = \\ \quad \mathcal{N}\biggl(  
\underbrace{\primean + \pricov \mbf{G}^T \left( \mbf{G} \pricov \mbf{G}^T + \mbf{R} \right)^{-1} \bigl(\mbf{y} - \mbf{g} - \mbf{G} \left( \primean - \opmean \right) \bigr)}_{\text{$\postmean$, the posterior mean}},  \\
\underbrace{\pricov - \pricov \mbf{G}^T \left( \mbf{G} \pricov \mbf{G}^T + \mbf{R} \right)^{-1} \mbf{G} \pricov}_{\text{$\postcov$, the posterior covariance}}
\biggr).
\end{multline}
Letting $\mbfdel{x}^\star = \postmean - \opmean$, and rearranging the posterior mean expression using the Sherman-Morrison-Woodbury identity, we have 
\begin{equation}
\label{eq:gpgnlin}
\left( \pricov^{-1} + \mbf{G}^T \mbf{R}^{-1} \mbf{G} \right) \, \delta\mbf{x}^\star =  \pricov^{-1} ( \primean - \opmean) + \mbf{G}^T \mbf{R}^{-1}(\mbf{y} - \mbf{g}),
\end{equation}
which is a linear system for $\mbfdel{x}^\star$ and can be viewed as the solution to the associated {\em maximum a posteriori} (MAP) problem.
We know that the $\mbf{G}^T\mbf{R}^{-1}\mbf{G}$ term in~\eqref{eq:gpgnlin} is block-diagonal (assuming each measurement depends on the state at a single time), but in general $\pricov^{-1}$ could be dense, depending on the choice of GP prior.  At each iteration, we solve for $\mbfdel{x}^\star$ and then update the guess according to $\opmean \leftarrow \opmean + \mbfdel{x}^\star$; upon convergence we set $\postmean = \opmean$.
This is effectively Gauss-Newton optimization over the whole trajectory.
\end{tolxemerg}

\begin{tolxemerg}{}{4pt}
We may want to also query the state at some other time(s) of interest (in addition to the measurement times).  Though we could jointly estimate the trajectory at the measurement and query times, a better idea is to use GP interpolation after the solution at the measurement times converges \citep{rasmussen06,tong_ijrr13b} (see Section~\ref{sec:query} for more details).  GP interpolation automatically picks the correct interpolation scheme for a given prior; it arrives at the same answer as the joint approach (in the linear case), but at lower computational cost.
\end{tolxemerg}

In general, this GP approach has complexity $O(\iK^3 + \iK^2\iQry)$, where $\iK$ is the number of measurement times and $\iQry$ is the number of query times (the initial solve is $O(\iK^3)$ and the queries are $O(\iK^2\iQry)$).  This is quite expensive, and therefore we will seek to improve the cost by exploiting the structure of the matrices involved under a particular class of GP priors.

\section{A Class of Exactly Sparse GP Priors}
\label{sec:ltvsde}

\subsection{Linear, Time-Varying Stochastic Differential Equations}

\begin{tolxemerg}{400}{4pt}
We now show that the inverse kernel matrix is exactly sparse for a particular class of useful GP priors.  We consider GPs generated by linear, time-varying (LTV) stochastic differential equations (SDE) of the form
\begin{equation}
\label{eq:ltvsde}
\dot{\mbf{x}}(t) = \sdemat(t) \mbf{x}(t) + \mbf{v}(t) + \mbf{L}(t) \mbf{w}(t),
\end{equation}
where $\mbf{x}(t)$ is the state, $\mbf{v}(t)$ is a (known) exogenous input, $\mbf{w}(t)$ is white process noise, and $\sdemat(t)$, $\mbf{L}(t)$ are time-varying system matrices.  The process noise is given by
\begin{equation}
\mbf{w}(t) \sim \mathcal{GP}(\mbf{0}, \mbf{Q}_C \, \delta(t-t^\prime)),
\end{equation}
a (stationary) zero-mean {\em Gaussian process} (GP) with (symmetric, positive-definite) {\em power-spectral density matrix}, $\mbf{Q}_C$, and $\delta(\cdot)$ is the {\em Dirac delta function}.  
\end{tolxemerg}

The general solution to this LTV SDE \citep{maybeck79,stengel94} is
\begin{equation}
\label{eq:ltvsdesol}
\mbf{x}(t) = \mbs{\Phi}(t,t_0) \mbf{x}(t_0) + \int_{t_0}^t \mbs{\Phi}(t,s) \left( \mbf{v}(s) + \mbf{L}(s) \mbf{w}(s) \right) \, ds,
\end{equation}
where $\mbs{\Phi}(t,s)$ is known as the {\em transition matrix}.
From this model, we seek the mean and covariance functions for $\mbf{x}(t)$.

\subsubsection{Mean Function}

For the mean function, we take the expected value of~\eqref{eq:ltvsdesol}:
\begin{equation}
\label{eq:meanfunc}
\primean(t) = E[\mbf{x}(t)] = \mbs{\Phi}(t,t_0) \primean_0 + \int_{t_0}^t \mbs{\Phi}(t,s) \mbf{v}(s)  \, ds,
\end{equation}
where $\primean_0 = \primean(t_0)$ is the initial value of the mean.  
If we now have a sequence of measurement times, $t_0 < t_1 < t_2 < \cdots < t_\iK$, then we can write the mean at these times in {\em lifted form} as
\begin{equation}
\primean = \sdemat \mbf{v},
\end{equation}
where
\begin{gather}
\label{eq:L}
\primean = \bbm \primean(t_0) \\ \primean(t_1) \\ \vdots \\ \primean(t_\iK) \ebm, \; 
\mbf{v} = \bbm \primean_0 \\ \mbf{v}_1 \\ \vdots \\ \mbf{v}_\iK \ebm,  \;
\mbf{v}_\ik = \int_{t_{\ik-1}}^{t_\ik} \mbs{\Phi}(t_\ik,s) \mbf{v}(s) \, ds, \nonumber \\
\sdemat = \bbm \mbf{1} & \mbf{0} &  \cdots & \mbf{0}  &\mbf{0} \; \\
\mbs{\Phi}(t_1,t_0) & \mbf{1} &  \cdots & \mbf{0} & \mbf{0} \; \\
\mbs{\Phi}(t_2,t_0) & \mbs{\Phi}(t_2,t_1) & \ddots & \vdots & \vdots \; \\
\vdots & \vdots &  \ddots & \mbf{0} & \mbf{0} \; \\
\mbs{\Phi}(t_{\iK-1},t_0) & \mbs{\Phi}(t_{\iK-1},t_1) & \cdots & \mbf{1} & \mbf{0} \; \\
\mbs{\Phi}(t_\iK,t_0) & \mbs{\Phi}(t_\iK,t_1) &  \cdots & \mbs{\Phi}(t_\iK,t_{\iK-1}) & \mbf{1} \;
\ebm.
\end{gather}
Note that $\sdemat$, the {\em lifted transition matrix}, is lower-triangular. We arrive at this form by simply splitting up~\eqref{eq:meanfunc} into a sum of integrals between each pair of measurement times.

\subsubsection{Covariance Function}

For the covariance function, we take the second moment of~\eqref{eq:ltvsdesol} to arrive at
\begin{equation}
\label{eq:covfunc}
\begin{split}
\pricov(t,t^\prime) &= E\left[(\mbf{x}(t) -\primean(t))(\mbf{x}(t^\prime) -\primean(t^\prime))^T \right] \\
&=  \mbs{\Phi}(t,t_0) \pricov_0 \mbs{\Phi}(t^\prime,t_0)^T    \\
 & + \int_{t_0}^{\min(t,t^\prime)} \mbs{\Phi}(t,s) \mbf{L}(s)\mbf{Q}_C\mbf{L}(s)^T \mbs{\Phi}(t^\prime,s)^T  \, ds, 
\end{split}
\end{equation}
where $\pricov_0$ is the initial covariance at $t_0$ and we have assumed $E[\mbf{x}(t_0)\mbf{w}(t)^T] = \mbf{0}$.  
Using a sequence of measurement times, $t_0 < t_1 < t_2 < \cdots < t_\iK$, we can write the covariance between two times as%
\begin{small}
	\begin{equation}
	\label{eq:K}
	\pricov(t_i, t_j) \hspace{-0.05cm} = \hspace{-0.05cm} \left\{\!\!\!\!\begin{array}{cl}
	\mbs{\Phi}(t_i,t_j) \left( \sum_{r=0}^j \mbs{\Phi}(t_j,t_r) \mbf{Q}_r \mbs{\Phi}(t_j,t_r)^T \right)   & t_j < t_i \\ 
	\sum_{r=0}^i \mbs{\Phi}(t_i,t_r) \mbf{Q}_r \mbs{\Phi}(t_i,t_r)^T    & t_i = t_j \\
	\left(\sum_{r=0}^i \mbs{\Phi}(t_i,t_r) \mbf{Q}_r \mbs{\Phi}(t_i,t_r)^T  \right) \mbs{\Phi}(t_j,t_i)^T   & t_i < t_j 
	\end{array}\right. 
	\end{equation}
\end{small}%
where
\begin{equation}
\label{eq:Qi}
\mbf{Q}_{\ik} = \int_{t_{\ik-1}}^{t_\ik} \mbs{\Phi}(t_\ik,s) \mbf{L}(s) \mbf{Q}_C \mbf{L}(s)^T \mbs{\Phi}(t_\ik,s)^T \, ds,
\end{equation}
for $\ik=1\ldots \iK$ and $\mbf{Q}_0 = \pricov_0$ (to keep the notation simple).  Given this preparation, we are now ready to state the main sparsity result that we will exploit in the rest of the paper.

\begin{lemma}
	\label{lem1}
	Let $t_0< t_1 < t_2 <  \cdots < t_\iK$ be a monotonically increasing sequence of time values.  Using~\eqref{eq:K}, we define the $(\iK+1) \times (\iK+1)$ kernel matrix (i.e., the prior covariance matrix between all pairs of times), $\pricov = \left[ \pricov(t_i,t_j) \right]_{ij}$.
	Then, we can factor $\pricov$ according to a lower-diagonal-upper decomposition,
	\begin{equation}
	\pricov = \sdemat \mbf{Q} \sdemat^T,
	\end{equation}
	where $\sdemat$ is the lower-triangular matrix given in~\eqref{eq:L} and $\mbf{Q} = \diag\left( \pricov_0, \mbf{Q}_1, \ldots, \mbf{Q}_\iK \right)$ with $\mbf{Q}_\ik$ given in~\eqref{eq:Qi}.
\end{lemma}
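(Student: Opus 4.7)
The plan is to verify the factorization $\pricov = \sdemat \mbf{Q} \sdemat^T$ by direct block-wise comparison against the three cases of \eqref{eq:K}. Because $\mbf{Q}$ is block-diagonal with entries $\mbf{Q}_0 = \pricov_0, \mbf{Q}_1, \ldots, \mbf{Q}_\iK$, and $\sdemat$ is block lower-triangular with $(\sdemat)_{ir} = \mbs{\Phi}(t_i, t_r)$ for $r \le i$ (and $\mbs{\Phi}(t_i,t_i) = \mbf{1}$) and zero otherwise, the triple product collapses to the single sum
\[ (\sdemat \mbf{Q} \sdemat^T)_{ij} \;=\; \sum_{r=0}^{\min(i,j)} \mbs{\Phi}(t_i, t_r) \, \mbf{Q}_r \, \mbs{\Phi}(t_j, t_r)^T, \]
where the upper limit $\min(i,j)$ arises from the requirement that both $(\sdemat)_{ir}$ and $(\sdemat)_{jr}$ be nonzero.

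From this single expression the three cases of \eqref{eq:K} fall out. When $i = j$, the sum runs up to $i$ and reproduces the middle branch verbatim. When $i > j$, the sum runs only up to $r = j$, and I would factor out the common left term $\mbs{\Phi}(t_i, t_j)$ from every summand using the transition-matrix semigroup identity $\mbs{\Phi}(t_i, t_r) = \mbs{\Phi}(t_i, t_j)\,\mbs{\Phi}(t_j, t_r)$, which holds whenever $i \ge j \ge r$; this gives the first branch. The case $i < j$ follows either by a symmetric factorization on the right or simply by transposing and using $\pricov(t_i,t_j) = \pricov(t_j,t_i)^T$, which is manifest from \eqref{eq:covfunc}.

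The only nontrivial ingredient is the transition-matrix semigroup property, which is standard for LTV systems (see, e.g., \citet{maybeck79,stengel94}); everything else is careful indexing. I therefore do not anticipate any real obstacle. The main item to watch is the bookkeeping around $r = 0$: the $r=0$ summand $\mbs{\Phi}(t_i,t_0)\,\pricov_0\,\mbs{\Phi}(t_j,t_0)^T$ must be identified with the deterministic initial-covariance term in \eqref{eq:covfunc} (via $\mbf{Q}_0 := \pricov_0$), separately from the process-noise contributions $\mbf{Q}_1,\ldots,\mbf{Q}_\iK$ defined in \eqref{eq:Qi}, so that the three branches of \eqref{eq:K} are matched exactly.
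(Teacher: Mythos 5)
Your proposal is correct and is precisely the ``verify by substitution'' argument that the paper's one-line proof alludes to: expand $(\sdemat \mbf{Q} \sdemat^T)_{ij}$ into the sum up to $\min(i,j)$, use the transition-matrix semigroup property to factor out $\mbs{\Phi}(t_i,t_j)$, and match the three branches of~\eqref{eq:K} with $\mbf{Q}_0 = \pricov_0$. No gaps; you have simply written out the details the paper omits.
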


\begin{proof}
	Straightforward to verify by substitution. \qed
\end{proof}

\begin{theorem}
	\label{thm1}
	The inverse of the kernel matrix constructed in Lemma~\ref{lem1}, $\pricov^{-1}$, is exactly sparse (block-tridiagonal). 
\end{theorem}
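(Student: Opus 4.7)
The plan is to exploit the factorization $\pricov = \sdemat \mbf{Q} \sdemat^T$ just established in Lemma~\ref{lem1}. Since $\mbf{Q} = \diag(\pricov_0, \mbf{Q}_1, \ldots, \mbf{Q}_\iK)$ is block-diagonal (and invertible, assuming each $\mbf{Q}_\ik$ is positive-definite, which follows from $\mbf{Q}_C$ being positive-definite), we immediately get $\mbf{Q}^{-1}$ as block-diagonal. Likewise, because $\sdemat$ is block lower-triangular with identity blocks on the diagonal, it is invertible and $\sdemat^{-1}$ exists. Therefore I would write
\begin{equation*}
\pricov^{-1} = \sdemat^{-T} \mbf{Q}^{-1} \sdemat^{-1},
\end{equation*}
and reduce the whole claim to understanding the sparsity pattern of $\sdemat^{-1}$.

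The key structural fact I would invoke is the semigroup (composition) property of the transition matrix, $\mbs{\Phi}(t_i,t_k)\mbs{\Phi}(t_k,t_j) = \mbs{\Phi}(t_i,t_j)$, together with $\mbs{\Phi}(t,t) = \mbf{1}$. Using these identities, I would verify by direct block multiplication that
\begin{equation*}
\sdemat^{-1} = \bbm \mbf{1} & & & & \\ -\mbs{\Phi}(t_1,t_0) & \mbf{1} & & & \\ & -\mbs{\Phi}(t_2,t_1) & \mbf{1} & & \\ & & \ddots & \ddots & \\ & & & -\mbs{\Phi}(t_\iK,t_{\iK-1}) & \mbf{1} \ebm,
\end{equation*}
i.e., $\sdemat^{-1}$ is block lower-bidiagonal. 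This is the step that does the real work, but it is essentially routine: the semigroup property causes all off-bidiagonal entries to cancel in the telescoping sum for $\sdemat\sdemat^{-1}$.

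Finally, with $\sdemat^{-1}$ block lower-bidiagonal, $\sdemat^{-T}$ block upper-bidiagonal, and $\mbf{Q}^{-1}$ block-diagonal, the triple product $\sdemat^{-T} \mbf{Q}^{-1} \sdemat^{-1}$ can have a nonzero $(i,j)$ block only when $|i-j| \le 1$. Hence $\pricov^{-1}$ is block-tridiagonal, which is the claim.

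The main obstacle, such as it is, is recognizing that the lifted transition matrix inverts to a bidiagonal object rather than something denser; once the semigroup identity is brought in, the rest is a mechanical check. No separate sparsity-propagation argument is needed because the banded-times-banded product bound is elementary. I would present the result as a one-line computation following Lemma~\ref{lem1} plus a short verification of the form of $\sdemat^{-1}$.
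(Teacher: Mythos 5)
Your proposal is correct and follows essentially the same route as the paper: invert the factorization $\pricov = \sdemat\mbf{Q}\sdemat^T$ from Lemma~\ref{lem1}, observe that $\sdemat^{-1}$ is block lower-bidiagonal while $\mbf{Q}^{-1}$ is block-diagonal, and conclude by multiplying. Your explicit appeal to the semigroup property $\mbs{\Phi}(t_i,t_k)\mbs{\Phi}(t_k,t_j)=\mbs{\Phi}(t_i,t_j)$ simply makes precise the verification that the paper leaves as ``by substitution and multiplication.''
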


\begin{proof}
	The decomposition of $\pricov$ in Lemma~\ref{lem1} provides
	\begin{equation}
	\pricov^{-1} = (\sdemat\mbf{Q}\sdemat^T)^{-1} = \sdemat^{-T} \mbf{Q}^{-1} \sdemat^{-1}.
	\end{equation}
	where the inverse of the lifted transition matrix is%
		\begin{equation}
		\sdemat^{-1} = \bbm \mbf{1} & \mbf{0} &  \cdots & \mbf{0} & \mbf{0} \; \\
		-\mbs{\Phi}(t_1,t_0) & \mbf{1} &  \cdots & \mbf{0} & \mbf{0} \; \\
		\mbf{0} & -\mbs{\Phi}(t_2,t_1) &  \ddots & \vdots & \vdots \; \\
		\mbf{0} & \mbf{0} &  \ddots & \mbf{0} & \mbf{0} \; \\
		\vdots & \vdots &  \cdots & \mbf{1} & \mbf{0} \; \\
		\mbf{0} & \mbf{0} &  \cdots & -\mbs{\Phi}(t_\iK,t_{\iK-1}) & \mbf{1}\;
		\ebm,
		\end{equation}
	and $\mbf{Q}^{-1}$ is block-diagonal.  The block-tridiagonal property of $\pricov^{-1}$ follows by substitution and multiplication. \qed
\end{proof}

While the block-tridiagonal property stated in Theorem 1 has been exploited in vision and robotics for a long time \citep{lu97,triggs00,thrun06}, the usual route to this point is to begin by converting the continuous-time motion model to discrete time and then to directly formulate a maximum a posteriori optimization problem; this bypasses writing out the full expression for $\pricov$ and jumps to an expression for $\pricov^{-1}$.  However, we require expressions for both $\pricov$ and $\pricov^{-1}$ to carry out our GP reinterpretation and facilitate querying the trajectory at an arbitrary time (through interpolation).  That said, it is also worth noting we have not needed to convert the motion model to discrete time and have made no approximations thus far.

\begin{tolxemerg}{300}{4pt}
Given the above results, the prior over the state (at the measurement times) can be written as
\begin{equation}
\mbf{x} \sim \mathcal{N}\left( \primean, \pricov \right) = \mathcal{N} \left( \sdemat \mbf{v}, \sdemat\mbf{Q}\sdemat^T \right).
\end{equation}
More importantly, using the result of Theorem~\ref{thm1} in~\eqref{eq:gpgnlin} gives
\begin{multline}
\label{eq:gpgnlin2}
\bigl(\overbrace{\sdemat^{-T}\mbf{Q}^{-1}\sdemat^{-1} + \mbf{G}^T \mbf{R}^{-1} \mbf{G}}^{\mbox{\small block-tridiagonal}}\bigr) \, \delta\mbf{x}^\star \\
= \sdemat^{-T}\mbf{Q}^{-1}( \mbf{v} - \sdemat^{-1} \opmean) + \mbf{G}^T \mbf{R}^{-1}(\mbf{y} - \mbf{g}).
\end{multline}
which can be solved in $O(\iK)$ time (at each iteration), using a sparse solver (e.g., sparse Cholesky decomposition then forward-backward passes). In fact, in the case of a linear measurement model, one such solver is the classical, forward-backward smoother (i.e., Kalman or Rauch--Tung--Striebel smoother) \citep{bell_94}. Put another way, the forward-backward smoother is possible {\em because} of the sparse structure of~\eqref{eq:gpgnlin2}.  For nonlinear measurement models, our scheme iterates over the whole trajectory; it is therefore related to, but not the same as, the `extended' version of the forward-backward smoother \citep{sarkka13b,sarkka13c,sarkka13}.
\end{tolxemerg}

\begin{figure}
	\includegraphics[width=\columnwidth]{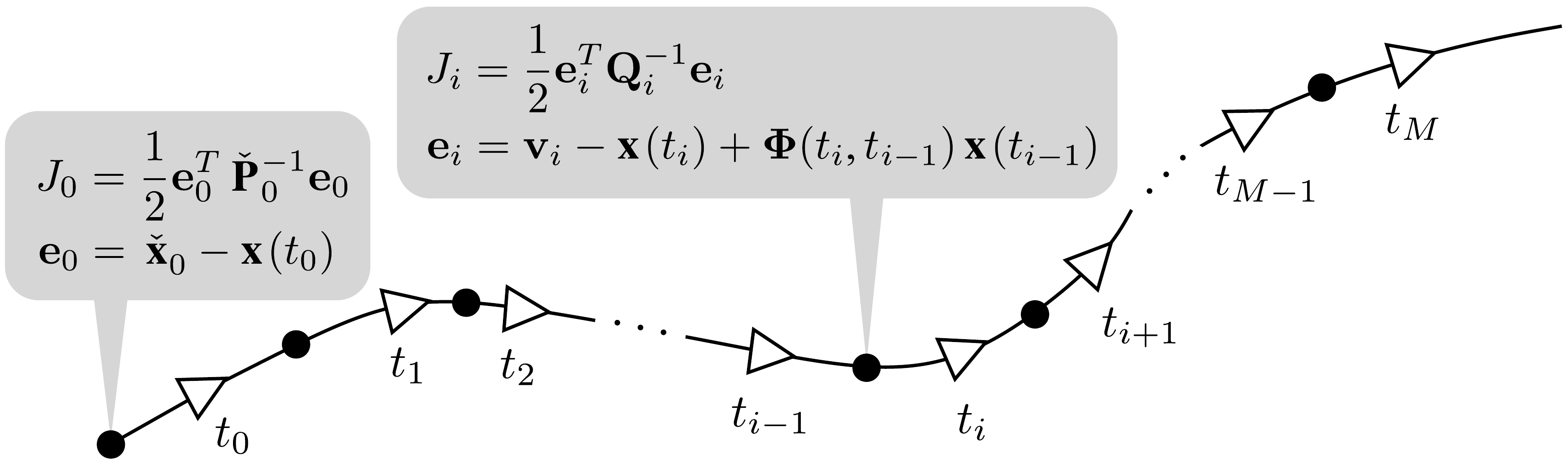}
	\caption{Although we began with a continuous-time prior to smooth our trajectory, the class of exactly sparse GPs results in only $\iK+1$ smoothing terms, $J_i$, in the associated optimization problem, the solution of which is~\eqref{eq:gpgnlin2}.  We can depict these graphically as factors (black dots) in a factor-graph representation of the prior \citep{dellaert06}.  The triangles are {\em trajectory states}, the nature of which depends on the choice of prior.}
	\label{fig:factors}
\end{figure}

\begin{tolxemerg}{300}{4pt}
Perhaps the most interesting outcome of Theorem~\ref{thm1} is that, although we are using a continuous-time prior to smooth our trajectory, at implementation we require only $\iK+1$ smoothing terms in the associated MAP optimization problem:  $\iK$ between consecutive pairs of measurement times plus $1$ at the initial time (unless we are also estimating a map).  As mentioned before, this is the same form that we would have arrived at had we started by converting our motion model to discrete time at the beginning \citep{lu97,triggs00,thrun06}.  This equivalence has been noticed before for recursive solutions to estimation problems with a continuous-time state and discrete-time measurements \citep{sarkka06}, but not in the batch scenario.  Figure~\ref{fig:factors} depicts the $\iK+1$ smoothing terms in a factor-graph representation of the prior \citep{dellaert06,kaess12}.   
\end{tolxemerg}

However, while the form of the smoothing terms/factors is similar to the original discrete-time form introduced by \citet{lu97}, our approach provides a principled method for their construction, starting from the continuous-time motion model.  Critically, we stress that the state being estimated must be {\em Markovian} in order to obtain the desirable sparse structure.  In the experiment section, we will investigate a common GP prior, namely the `constant-velocity' or white-noise-on-acceleration model: $\ddot{\mbf{p}}(t) = \mbf{w}(t)$, 
where $\mbf{p}(t)$ represents position.  For this choice of model, $\mbf{p}(t)$ is not Markovian, but
\begin{equation}
\mbf{x}(t) = \bbm \mbf{p}(t) \\ \dot{\mbf{p}}(t) \ebm,
\end{equation}
is.  This implies that, if we want to use the `constant-velocity' prior and enjoy the sparse structure without approximation, we must estimate a stacked state with both position and velocity.  Marginalizing out the velocity variables fills in the inverse kernel matrix, thereby destroying the sparsity.

If all we cared about was estimating the value of the state at the measurement times, our GP paradigm arguably offers little beyond a reinterpretation of the usual discrete-time approach to batch estimation.  However, by taking the time to set up the problem in this manner, we can now query the trajectory at {\em any} time of interest using the classic interpolation scheme that is inherent to GP regression \citep{tong_ijrr13b}.

\subsubsection{Querying the Trajectory}
\label{sec:query}

As discussed in Section~\ref{sec:gpr}, after we solve for the trajectory at the measurement times, we may want to query it at other times of interest.  
This operation also benefits greatly from the sparse structure.  
To keep things simple, we consider a single query time, $t_\ik \leq \tau < t_{\ik+1}$ (see Figure~\ref{fig:setup}).  
The standard linear GP interpolation formulas \citep{rasmussen06,tong_ijrr13b} are 
\begin{subequations}
	\label{eq:lininterp}
	\begin{align}
	\postmean(\tau) &= \primean(\tau) + \pricov(\tau) \pricov^{-1} (\postmean - \primean), \\
	\label{eq:lininterp_covariance}
	\postcov(\tau,\tau) &= \pricov(\tau,\tau) + \pricov(\tau) \pricov^{-1} \Bigl( \postcov - \pricov \Bigr)\pricov^{-T}\pricov(\tau)^T ,
	\end{align}
\end{subequations}%
where $\pricov(\tau) = \bbm \pricov(\tau,t_0) & \cdots & \pricov(\tau,t_\iK) \ebm$. 
Note, we write \eqref{eq:lininterp_covariance} in a less common, but equivalent, form, as we intend to exploit the sparsity of the product $\pricov(\tau) \pricov^{-1}$.

For the mean function at the query time, we simply have
\begin{equation}
\label{eq:querymeanfunc}
\primean(\tau) = \mbs{\Phi}(\tau,t_\ik) \primean_\ik + \int_{t_\ik}^\tau \mbs{\Phi}(\tau,s) \mbf{v}(s) \, ds,
\end{equation}
which can be evaluated in $O(1)$ time.
For the covariance function at the query time, we have 
\begin{multline}
\label{eq:querycovfunc}
\pricov(\tau,\tau) = \mbs{\Phi}(\tau,t_\ik) \pricov(t_\ik,t_\ik) \mbs{\Phi}(\tau,t_\ik)^T + \\
\int_{t_\ik}^\tau \mbs{\Phi}(\tau,s) \mbf{L}(s) \mbf{Q}_C \mbf{L}(s)^T \mbs{\Phi}(\tau,s)^T \, ds,
\end{multline}
which is also $O(1)$ to evaluate.

The computational savings come from the sparsity of the product $\pricov(\tau) \pricov^{-1}$, which represents the burden of the cost in the interpolation formula.  After some effort, it turns out we can write $\pricov(\tau)$ as
\begin{equation}
\pricov(\tau) = \mbf{V}(\tau)\sdemat^T,
\end{equation}
where $\sdemat$ was defined before,
\begin{align}
\mbf{V}(\tau) &= \Bigl[ \begin{matrix} \mbs{\Phi}(\tau,t_\ik) \mbs{\Phi}(t_\ik,t_0) \pricov_0 &  \mbs{\Phi}(\tau,t_\ik) \mbs{\Phi}(t_\ik,t_1) \mbf{Q}_1 & \cdots \end{matrix} \nonumber \\ 
& \quad\; \begin{matrix} \cdots &  \mbs{\Phi}(\tau,t_\ik) \mbs{\Phi}(t_\ik,t_{\ik-1})  \mbf{Q}_{\ik-1} & \mbs{\Phi}(\tau,t_\ik) \mbf{Q}_\ik  & \cdots \end{matrix} \nonumber \\ 
& \quad\quad\quad\quad\quad\quad\quad \begin{matrix} \cdots &  \mbf{Q}_{\tau} \mbs{\Phi}(t_{\ik+1},\tau)^T & \mbf{0} & \cdots & \mbf{0}  \end{matrix} \Bigr],
\end{align}
and
\begin{equation}
\mbf{Q}_{\tau} = \int_{t_\ik}^{\tau} \mbs{\Phi}(\tau,s) \mbf{L}(s) \mbf{Q}_C \mbf{L}(s)^T \mbs{\Phi}(\tau,s)^T \, ds.
\end{equation}
Returning to the desired product, we have
\begin{equation}
\pricov(\tau) \pricov^{-1} = \mbf{V}(\tau) \underbrace{\sdemat^T \sdemat^{-T}}_{\mbf{1}} \mbf{Q}^{-1} \sdemat^{-1} = \mbf{V}(\tau) \mbf{Q}^{-1} \sdemat^{-1}.
\end{equation}
Since $\mbf{Q}^{-1}$ is block-diagonal, and $\sdemat^{-1}$ has only the main diagonal and the one below it non-zero, we can evaluate the product very efficiently. Note, there are exactly two non-zero block-columns:
\begin{equation}
\label{eq:KtauKinv}
\pricov(\tau) \pricov^{-1} = \Bigl[ \, \mbf{0} \;\; \cdots \;\;  \mbf{0} \;\; \underbrace{\mbs{\Lambda}(\tau)}_{\substack{\text{\small block}\\\text{\small col. $\ik$}}} \;\; \underbrace{\mbs{\Psi}(\tau)}_{\substack{\text{\small block}\\\text{\small col. \!$\ik\!+\!1$}}} \; \mbf{0} \;\; \cdots \;\; \mbf{0}  \, \Bigr],
\end{equation}
where
\begin{subequations}
	\begin{align}
	\mbs{\Lambda}(\tau) &= \mbs{\Phi}(\tau,t_\ik) - \mbs{\Psi}(\tau) \mbs{\Phi}(t_{\ik+1},t_\ik), \\
	\mbs{\Psi}(\tau) &= \mbf{Q}_{\tau} \mbs{\Phi}(t_{\ik+1},\tau)^T \mbf{Q}_{\ik+1}^{-1}.
	\end{align}
\end{subequations}
Inserting this into~\eqref{eq:lininterp}, we have
\begin{subequations}
	\begin{align}
	\postmean(\tau) &= \primean(\tau) + 
	\bbm \mbs{\Lambda}(\tau) & \mbs{\Psi}(\tau) \ebm
	\bbm \postmean_\ik - \primean_\ik \\ \postmean_{\ik+1} - \primean_{\ik+1} \ebm, \\
	\begin{split}
		\postcov(\tau,\tau) &= \pricov(\tau,\tau) 
		+ \bbm \mbs{\Lambda}(\tau) & \mbs{\Psi}(\tau) \ebm
		\biggl(
		\bbm \postcov_{\ik, \ik} & \postcov_{\ik, \ik+1} \\ \postcov_{\ik+1,\ik} & \postcov_{\ik+1,\ik+1} \ebm \\
		& \quad\quad\quad - 
		\bbm \pricov_{\ik, \ik} & \pricov_{\ik, \ik+1} \\ \pricov_{\ik+1,\ik} & \pricov_{\ik+1,\ik+1} \ebm
		\biggr)
		\bbm \mbs{\Lambda}(\tau)^T \\ \mbs{\Psi}(\tau)^T \ebm,
	\end{split}
	\end{align}
\end{subequations}
which is a linear combination of just the terms from $t_\ik$ and $t_{\ik+1}$.  If the query time is beyond the last measurement time, $t_\iK < \tau$, the expression will involve only the term at $t_\iK$ and represents extrapolation/prediction rather than interpolation/smoothing.  In summary, to query the trajectory at a single time of interest is $O(1)$ complexity.

\subsubsection{Interpolating Measurement Times}
\label{sec:measinterp}

Thus far, we have shown that by storing the \emph{Markovian}, $\mbf{x}(t_\ik)$, state at every measurement time, $t_\ik, \ik = 1 \hdots \iK$, we are able to perform Gaussian-process regression in $O(\iK)$ time and then query for other times of interest $\mbf{x}(\tau)$, all without approximation.
Given the ability to interpolate, storing the state at every measurement time may be excessive, especially in a scenario where the measurement rate is high in comparison to the smoothness of the robot kinematics (e.g. a 1000\;Hz IMU or individually timestamped lidar measurements, mounted on a slow indoor platform). 

\citet{tong_ijrr13b} discuss a scheme to remove some of the measurement times from the initial solve, which further reduces computational cost with some loss of accuracy. 
By estimating $\mbf{x}$ at only some \emph{keytimes}, $t_\ikf, \ikf = 0 \hdots \iKf$, which may or may not align with some subset of the measurement times, the interpolation equation can be used to modify our measurement model as follows,
\begin{align}
\mbf{y}_\ik &= \mbf{g}\left(\opmean(t_\ik)\right) \nonumber \\
&= \mbf{g}\hspace{-0.05cm}\left(\hspace{-0.02cm} \primean(t_\ik) + 
\bbm \mbs{\Lambda}(t_\ik) & \mbs{\Psi}(t_\ik) \ebm
\bbm \mbf{x}_{\text{op},\ikf} - \primean_\ikf \\ \mbf{x}_{\text{op},\ikf+1} - \primean_{\ikf+1} \ebm \right)\hspace{-0.08cm},
\end{align}
where $t_\ikf \leq t_\ik < t_{\ikf+1}$. The matrices $\mbs{\Lambda}(t_\ik)$ and $\mbs{\Psi}(t_\ik)$ are constructed according to \eqref{eq:KtauKinv}, where the measurement time, $t_\ik$, is now the query time and the bounding times (previously $t_\ik $ and $ t_{\ik+1}$) become the keytimes $t_\ikf $ and $ t_{\ikf+1}$.
The effect on $\mbf{G}$ is that each block row now has two adjacent non-zero block columns, rather than one.
This causes the structure of $\mbf{G}^T \mbf{R}^{-1} \mbf{G}$ to change from being block-diagonal to block-tridiagonal;
the structure of $\mbf{G}^T \mbf{R}^{-1} \mbf{G}$ is of particular importance because it directly affects the complexity of solving \eqref{eq:gpgnlin2}.
Fortunately, the complexity is unaffected because our prior term is also block-tridiagonal.

\begin{tolxemerg}{}{4pt}
An important intuition is that the interpolated state at some measurement time, $t_\ik$, depends only on the state estimates, $\mbf{x}_{\text{op},\ikf}$ and $\mbf{x}_{\text{op},\ikf+1}$, and the prior terms, $\primean(t)$ and $\pricov(t,t')$.
Therefore, the effect of estimating $\mbf{x}$ at some reduced number of times is that we obtain a smoothed solution;
any details provided by high frequency measurements between two times of interest, $t_\ikf$ and $t_{\ikf+1}$, are lost.
Although this detail information is smoothed over, there are obvious computational savings in having a smaller state and a subtle benefit regarding the prevention of overfitting measurements.
With respect to fitting, the glaring issue with this scheme is that there is not a principled method to determine an appropriate spacing for the \emph{keytimes}, $t_\ikf$, such that we could guarantee a bound on the loss of accuracy.
Learning from the parametric continuous-time estimation schemes, which suffer from a similar issue, it is reasonable to start with some uniform spacing and add additional \emph{keytimes} based on the results of a normalized-innovation-squared test between each pair of \emph{keytimes} \citep{oth_cvpr13}.
Some experimentation is necessary to better understand which approach to state discretization is best in which situation.
\end{tolxemerg}

\subsection{Nonlinear, Time-Varying Stochastic Differential Equations}

\begin{tolxemerg}{}{4pt}
In reality, most systems are inherently nonlinear and cannot be accurately described by a LTV SDE in the form of \eqref{eq:ltvsde}.
Moving forward, we show how our results concerning sparsity can be applied to nonlinear, time-varying (NTV) stochastic differential equations (SDE) of the form
\begin{equation}
\label{eq:ntvsde}
\dot{\mbf{x}}(t) = \mbf{f}(\mbf{x}(t),\mbf{u}(t),\mbf{w}(t)),
\end{equation}
where $\mbf{f}(\cdot)$ is a nonlinear function, $\mbf{x}(t)$ is the state, $\mbf{u}(t)$ is a (known) exogenous input, and $\mbf{w}(t)$ is white process noise.
To perform GP regression with a nonlinear process model, we begin by linearizing the SDE about a continuous-time operating point $\opmean(t)$,
\begin{align}
\dot{\mbf{x}}(t) &= \mbf{f}(\mbf{x}(t),\mbf{u}(t),\mbf{w}(t)) \nonumber \\
\label{eq:ntvsde_FLlin}
&\approx \mbf{f}(\opmean(t),\mbf{u}(t),\mbf{0}) +
\mbf{F}(t) (\mbf{x}(t) - \opmean(t)) +
\mbf{L}(t) \mbf{w}(t).
\end{align}
where
\begin{align}
\mbf{F}(t) = \left. \frac{\partial \mbf{f}}{\partial \mbf{x}} \right|_{\opmean(t),\mbf{u}(t),\mbf{0}}, \quad
\mbf{L}(t) = \left. \frac{\partial \mbf{f}}{\partial \mbf{w}} \right|_{\opmean(t),\mbf{u}(t),\mbf{0}}.
\end{align}
Setting
\begin{equation}
\label{eq:ntvsde_vlin}
\mbf{v}(t) = \mbf{f}(\opmean(t),\mbf{u}(t),\mbf{0}) - \mbf{F}(t) \opmean(t)
\end{equation}
lets us rewrite \eqref{eq:ntvsde_FLlin} in the familiar LTV SDE form,
\begin{align}
\label{eq:ntvsde_lin}
\dot{\mbf{x}}(t) &\approx \mbf{F}(t) \mbf{x}(t) + \mbf{v}(t) + \mbf{L}(t) \mbf{w}(t),
\end{align}
where $\mbf{F}(t)$, $\mbf{v}(t)$, and $\mbf{L}(t)$ are known functions of time, since $\opmean(t)$ is known.
Setting the operating point, $\opmean(t)$, to our best guess of the underlying trajectory at each iteration of GP regression, we note a similarity in nature to the discrete-time, recursive method of \citet{bell_94}; in essence, our approach offers a continuous-discrete version of the Gauss-Newton estimator, using the Gaussian-process-regressor type of approximate bridging between the measurement times.
\end{tolxemerg}

\subsubsection{Mean and Covariance Function}
\label{sec:nonlinear_mean_and_cov}

Although the equations for calculating the mean, $\primean$, and covariance, $\pricov$, remain the same as in \eqref{eq:meanfunc} and \eqref{eq:covfunc}, there are a few algorithmic differences and issues that arise due to the new dependence on the continuous-time operating point, $\opmean(t)$.
An obvious difference in contrast to the GP regression using a LTV SDE is that we now must recalculate $\primean$ and $\pricov$ at each iteration of the optimization (since the linearization point is updated).
The main algorithmic issue that presents itself is that the calculation of $\primean$ and $\pricov$ require integrations involving $\mbf{F}(t)$, $\mbf{v}(t)$, and $\mbf{L}(t)$, which in turn require the evaluation of $\opmean(t)$ over the time period $t \in [t_0, t_\iK]$.
Since an estimate of the posterior mean, $\opmean$, is only stored at times of interest, we must make use of the efficient (for our particular choice of process model) GP interpolation equation derived in Section~\ref{sec:query},
\begin{equation}
\label{eq:nonlininterp}
\opmean(\tau) = \primean(\tau) + \pricov(\tau) \pricov^{-1} (\opmean - \primean).
\end{equation}
The problem is that the above interpolation depends again on $\primean$ and $\pricov$, which are the prior variables for which we want to solve.
To rectify this circular dependence, we take advantage of the iterative nature of GP regression and choose to evaluate \eqref{eq:nonlininterp} using the values of $\primean(\tau)$, $\primean$, and $\pricov(\tau)\pricov^{-1}$ from the previous iteration.

\begin{tolxemerg}{}{4pt}
Another issue with nonlinear process models is that identifying an analytical expression for the state transition matrix, $\mbs{\Phi}(t,s)$ (which is dependent on the form of $\mbf{F}(t)$), can be very challenging.
Fortunately, the transition matrix can also be calculated numerically via the integration of the normalized fundamental matrix, $\mbs{\Upsilon}(t)$, where
\begin{equation}
\label{eq:normfundmat}
\mbsdot{\Upsilon}(t) = \mbf{F}(t)\mbs{\Upsilon}(t), \quad \mbs{\Upsilon}(0) = \mbf{1}.
\end{equation}
Storing $\mbs{\Upsilon}(t)$ at times of interest, the transition matrix can then be computed using
\begin{equation}
\mbs{\Phi}(t,s) = \mbs{\Upsilon}(t)\mbs{\Upsilon}(s)^{-1}.
\end{equation}
In contrast to the LTV SDE system, using a nonlinear process model causes additional computational costs (primarily due to numerical integrations), but complexity remains linear in the length of the trajectory (at each iteration), and therefore continues to be computationally tractable.
\end{tolxemerg}

\subsubsection{Querying the Trajectory}
In this section, we discuss the algorithmic details of the GP interpolation procedure as it pertains to a NTV SDE process model.
Recall the standard linear GP interpolation formulas presented in \eqref{eq:lininterp}, for a single query time, $t_\ik \leq \tau < t_{\ik+1}$:
\begin{subequations}
	\begin{align*}
	\postmean(\tau) &= \primean(\tau) + \pricov(\tau) \pricov^{-1} (\postmean - \primean), \\
	\postcov(\tau,\tau) &= \pricov(\tau,\tau) + \pricov(\tau) \pricov^{-1} \Bigl( \postcov - \pricov \Bigr)\pricov^{-T}\pricov(\tau)^T .
	\end{align*}
\end{subequations}
The final iteration of GP regression (where $\postmean = \opmean$), provides values for $\primean$, $\pricov$, $\postmean$, and $\postcov$; however, obtaining values for $\primean(\tau)$, $\pricov(\tau) \pricov^{-1}$ (recall sparse structure in \eqref{eq:KtauKinv}), and $\pricov(\tau, \tau)$ is not trivial.
The suggestion made in Section~\ref{sec:nonlinear_mean_and_cov} was to use values from the previous (or in this case, final) iteration.
Thus far, the method of storing these continuous-time functions has been left ambiguous.

The naive way to store $\primean(\tau)$, $\pricov(\tau) \pricov^{-1}$, and $\pricov(\tau, \tau)$ is to keep the values at all numerical integration timesteps;
the memory requirement of this is proportional to the length of the trajectory.
During the optimization procedure this naive method may in fact be preferable as it reduces computation time in lieu of additional storage (which is fairly cheap using current technology). 
For long-term storage, a method that uses a smaller memory footprint (at the cost of additional computation) may be desirable.
The remainder of this section will focus on identifying the minimal storage requirements, such that queries remain $O(1)$ complexity.

\begin{tolxemerg}{}{4pt}
We begin by examining the mean function; substituting \eqref{eq:querymeanfunc} and \eqref{eq:ntvsde_vlin}, into the GP interpolation formula above:
\begin{multline}
\label{eq:numint_querymean}
\postmean(\tau) = \mbs{\Phi}(\tau,t_\ik) \primean_\ik + \int_{t_\ik}^\tau \mbs{\Phi}(\tau,s) \bigl( \mbf{f}(\postmean(s),\mbf{u}(s),\mbf{0}) \\
 - \mbf{F}(s) \postmean(s) \bigr) \, ds + \pricov(\tau) \pricov^{-1} (\postmean - \primean).
\end{multline}
It is straightforward to see how $\postmean(\tau)$ can be simultaneously numerically integrated with the normalized fundamental matrix from \eqref{eq:normfundmat}, as long as we can evaluate the term $\pricov(\tau) \pricov^{-1}$.
Although we chose to examine the mean function, a similar conclusion can be drawn by examining the covariance function in \eqref{eq:querycovfunc}.
Recalling the sparse structure of $\pricov(\tau) \pricov^{-1}$ (see \eqref{eq:KtauKinv}) for an LTV SDE process model, where
\begin{align*}
\mbs{\Lambda}(\tau) &= \mbs{\Phi}(\tau,t_\ik) - \mbs{\Psi}(\tau) \mbs{\Phi}(t_{\ik+1},t_\ik), \\
\mbs{\Psi}(\tau) &= \mbf{Q}_{\tau} \mbs{\Phi}(t_{\ik+1},\tau)^T \mbf{Q}_{\ik+1}^{-1},
\end{align*}
we are able to draw two conclusions.
First, in the case that an analytical expression for $\mbs{\Phi}(t,s)$ is unavailable, we must store $\mbs{\Upsilon}(t_\ik)$ at the times of interest $t_\ik, \ik = 1 \hdots \iK$, since any numerical integration will involve `future' values of $\mbs{\Upsilon}(t)$ (via the evaluation of $\mbs{\Phi}(t_{\ik+1},t_\ik)$, $\mbs{\Phi}(t_{\ik+1},\tau)$, and $\mbf{Q}_{\ik+1}^{-1}$).
Second, in the case that $\mbf{L}(t)$ is a time-varying matrix, we must store $\mbf{Q}_\ik^{-1}, \ik = 1 \hdots \iK$, since the evaluation of $\mbf{Q}_{\ik}^{-1}$, requires $\mbf{L}(s)$ (evaluated at $\postmean(s)$) over the time period $[t_{\ik-1}, t_{\ik}]$.
The memory requirements of this alternative are still proportional to the length of the trajectory, but are greatly reduced in contrast to the naive method;
the added cost is that any new query requires numerical integration from the nearest time $t_\ik$ to the query time $\tau$ (which is of order $O(1)$).
\end{tolxemerg}

\subsection{Training the Hyperparameters}
\label{sec:hyper_training}

\begin{tolxemerg}{}{2pt}
As with any GP regression, we have {\em hyperparameters} associated with our covariance function, namely $\mbf{Q}_C$, which affect the smoothness and length scale of the class of functions we are considering as motion priors. The covariances of the measurement noises can also be unknown or uncertain. The standard approach to selecting these parameters is to use a training dataset (with ground-truth), and perform optimization using the log marginal likelihood (log-evidence) or its approximation as the objective function \citep{rasmussen06}.
We begin with the marginal likelihood equation, 
\begin{align}
\begin{split}
\log p(\mbf{y}|\mbf{Q}_C) &= 
-\frac{1}{2} (\mbf{y} - \primean)^T \mbf{P}_\ihy^{-1} (\mbf{y} - \primean) \\
& \quad\quad - \frac{1}{2} \log |\mbf{P}_\ihy| 
- \frac{n}{2} \log 2\pi,
\end{split} \\
\mbf{P}_\ihy &= \pricov(\mbf{Q}_C) + \sigma_\ihy^2 \mbf{1},
\end{align}
where $\mbf{y}$ is a stacked vector of state observations (ground-truth measurements) with additive noise $\mathcal{N}(\mbf{0},\sigma_\ihy^2\mbf{1})$, $\primean$ is a stacked vector of the mean equation evaluated at the observation times, $t_\ihy, \ihy = 1 \hdots \iHy$, and $\pricov$ is the covariance matrix associated with $\primean$ and generated using the hyperparameters $\mbf{Q}_C$.
Taking partial derivatives of the marginal likelihood with respect to the hyperparameters, we get
\begin{multline}
\label{eq:partial_marglike}
\frac{\partial}{\partial \mbf{Q}_{C_{ij}}} \log p(\mbf{y}|\mbf{Q}_C) = 
\frac{1}{2} (\mbf{y} - \primean)^T \mbf{P}_\ihy^{-1} \frac{\partial \mbf{P}_\ihy}{\partial \mbf{Q}_{C_{ij}}} \mbf{P}_\ihy^{-1} (\mbf{y} - \primean) 
\\ - \frac{1}{2} \tr \left( \mbf{P}_\ihy^{-1}\frac{\partial \mbf{P}_\ihy}{\partial \mbf{Q}_{C_{ij}}} \right),
\end{multline}%
where we have used that $\frac{\partial \mbf{P}_\ihy^{-1}}{\partial \mbf{Q}_{C_{ij}}} = -\mbf{P}_\ihy^{-1} \frac{\partial \mbf{P}_\ihy}{\partial \mbf{Q}_{C_{ij}}} \mbf{P}_\ihy^{-1}$.
\end{tolxemerg}

\begin{tolxemerg}{}{2pt}
The typical complexity of hyperparameter training is bottlenecked at $O(\iHy^3)$ due to the inversion of $\mbf{P}_\ihy$ (which is typically dense).
Given $\mbf{P}_\ihy^{-1}$, the complexity is then typically bottlenecked at $O(\iHy^2)$ due to the calculation of $\frac{\partial \mbf{P}_\ihy}{\partial \mbf{Q}_{C_{ij}}}$.
Fortunately, in the present case, the computation of the log marginal likelihood can also be done efficiently due to the sparseness of the inverse kernel matrix, $\pricov^{-1}$.
Despite the addition of observation noise, $\sigma_\ihy^2\mbf{1}$, causing $\mbf{P}_\ihy^{-1}$ to be a dense matrix, we are still able to take advantage of our sparsity using the Sherman-Morrison-Woodbury identity:
\begin{align}
\mbf{P}_\ihy^{-1} 
= \pricov^{-1} - \pricov^{-1} \left(\pricov^{-1} + \frac{1}{\sigma_\ihy^2}\mbf{1} \right)^{-1} \pricov^{-1} 
\hspace{-0.08cm} = \mbf{F}^{-T} \mbf{Q}_\ihy^{-1} \mbf{F}^{-1},
\end{align}
where we define
\begin{equation}
\mbf{Q}_\ihy^{-1} = \left( \mbf{Q}^{-1} - \mbf{Q}^{-1} \mbf{F}^{-1} \left(\pricov^{-1} + \frac{1}{\sigma_\ihy^2}\mbf{1} \right)^{-1} \mbf{F}^{-T} \mbf{Q}^{-1} \right).
\end{equation}
Although computing $\mbf{P}_\ihy^{-1}$ explicitly is of order $O(\iHy^2)$, we note that the product with a vector, $\mbf{P}_\ihy^{-1}\mbf{v}$, can be computed in $O(\iHy)$ time; this is easily observable given that $\mbf{Q}^{-1}$ is block-diagonal, $\mbf{F}^{-1}$ is lower block-bidiagonal, and the product $(\pricov^{-1} + \frac{1}{\sigma_\ihy^2}\mbf{1})^{-1} \mbf{v}$ can be computed in $O(\iHy)$ time using Cholesky decomposition, because $(\pricov^{-1} + \frac{1}{\sigma_\ihy^2}\mbf{1})$ is block-tridiagonal.
While the two terms in \eqref{eq:partial_marglike} can be combined into a single trace function, it is simpler to study their computational complexity separately.
Starting with the first term, we have
\begin{align}
& \frac{1}{2} (\mbf{y} - \primean)^T \mbf{P}_\ihy^{-1} \frac{\partial \mbf{P}_\ihy}{\partial \mbf{Q}_{C_{ij}}} \mbf{P}_\ihy^{-1} (\mbf{y} - \primean) \nonumber \\
\label{eq:partial_marg1}
& \quad\quad = \frac{1}{2} (\mbf{y} - \primean)^T \mbf{F}^{-T} \mbf{Q}_\ihy^{-1} \frac{\partial \mbf{Q}}{\partial \mbf{Q}_{C_{ij}}} \mbf{Q}_\ihy^{-1} \mbf{F}^{-1} (\mbf{y} - \primean),
\end{align}
where
\begin{align}
\frac{\partial \mbf{Q}}{\partial \mbf{Q}_{C_{ij}}} &= \diag\left( \mbf{0}, \frac{\partial \mbf{Q}_1}{\partial \mbf{Q}_{C_{ij}}}, \ldots, \frac{\partial \mbf{Q}_\iK}{\partial \mbf{Q}_{C_{ij}}} \right), \\
\frac{\partial \mbf{Q}_{n}}{\partial \mbf{Q}_{C_{ij}}}
&= \int_{t_{n-1}}^{t_n} \mbs{\Phi}(t_n,s) \mbf{L}(s) \frac{\partial \mbf{Q}_C}{\partial \mbf{Q}_{C_{ij}}} \mbf{L}(s)^T \mbs{\Phi}(t_n,s)^T \, ds, \nonumber \\
&= \int_{t_{n-1}}^{t_n} \mbs{\Phi}(t_n,s) \mbf{L}(s) \mbf{1}_{i,j} \mbf{L}(s)^T \mbs{\Phi}(t_n,s)^T \, ds,
\end{align}
and $\mbf{1}_{i,j}$ denotes a projection matrix with a $1$ at the $i$\textsuperscript{th} row and $j$\textsuperscript{th} column.
Taking advantage of the sparse matrices and previously mentioned fast matrix-vector products, it is clear that \eqref{eq:partial_marg1} can be computed in $O(\iHy)$ time (in contrast to the typical $O(\iHy^3)$ time).
\end{tolxemerg}

Taking a look at the second term, we have that
\begin{align}
&\frac{1}{2} \tr \left( \mbf{P}_\ihy^{-1}\frac{\partial \mbf{P}_\ihy}{\partial \mbf{Q}_{C_{ij}}} \right) \nonumber \\
& \quad\quad = \frac{1}{2} \tr \left( \left(\mbf{F}^{-T} \mbf{Q}_\ihy^{-1} \mbf{F}^{-1} \right) \left(\mbf{F} \frac{\partial \mbf{Q}}{\partial \mbf{Q}_{C_{ij}}} \mbf{F}^T \right) \right) \nonumber \\
& \quad\quad = \frac{1}{2} \tr \left( \frac{\partial \mbf{Q}}{\partial \mbf{Q}_{C_{ij}}} \mbf{Q}_\ihy^{-1} \right),
\end{align}
which can only be computed in $O(\iHy^2)$, due to the form of $\mbf{Q}_\ihy^{-1}$.
In general, the total complexity of training for this sparse class of prior is then bottlenecked at $O(\iHy^2)$. 
A complexity of $O(\iHy)$ can only be achieved by ignoring the additive measurement noise, $\sigma_\ihy^2 \mbf{1}$;
revisiting the second term of \eqref{eq:partial_marglike}, and setting $\mbf{P}_\ihy = \pricov$, we find that
\begin{equation}
\frac{1}{2} \tr \left( \pricov^{-1}\frac{\partial \pricov}{\partial \mbf{Q}_{C_{ij}}} \right) 
= \frac{1}{2} \tr \left( \frac{\partial \mbf{Q}}{\partial \mbf{Q}_{C_{ij}}} \mbf{Q}^{-1} \right),
\end{equation}
which can be computed in time $O(\iHy)$.
The effect of ignoring the measurement noise, $\sigma_\ihy^2 \mbf{1}$, is that the trained hyperparameters will result in an underconfident prior;
the degree of this underconfidence depends on the magnitude of the noise we are choosing to ignore.
If accurate ground-truth measurements are available and the size of the training dataset is very large, this approximation may be beneficial.

\subsection{Complexity}

\begin{tolxemerg}{}{2pt}
We conclude this section with a brief discussion of the time complexity of the overall algorithm when exploiting the sparse structure.  If we have $\iK$ measurement times and want to query the trajectory at $\iQry$ additional times of interest, the complexity of the resulting algorithm using GP regression with {\em any} linear (or nonlinear), time-varying process model driven by white noise will be $O(\iK+\iQry)$.  This is broken into the two major steps as follows.  The initial solution to find $\opmean$ (at the measurement times) can be done in $O(\iK)$ time (per iteration) owing to the block-tridiagonal structure discussed earlier.  Then, the cost of the queries at $\iQry$ other times of interest is $O(\iQry)$ since each individual query is $O(1)$.  Clearly, $O(\iK+\iQry)$ is a big improvement over the $O(\iK^3+\iK^2\iQry)$ cost when we did not exploit the sparse structure of the problem.  
\end{tolxemerg}

\section{Mobile Robot Example}
\label{sec:experiment}

\subsection{Linear Constant-Velocity GP Prior}
\label{sec:exp_ltipri}

\begin{tolxemerg}{}{2pt}
We will demonstrate the advantages of the sparse structure through an example employing the `constant-velocity' prior, $\ddot{\mbf{p}}(t) = \mbf{w}(t)$.  This can be expressed as a linear, time-invariant SDE of the form in~\eqref{eq:ltvsde} with
\begin{equation}
\mbf{x}(t) = \bbm \mbf{p}(t) \\ \dot{\mbf{p}}(t) \ebm, \; \sdemat(t) = \bbm \mbf{0} & \mbf{1} \\ \mbf{0} & \mbf{0} \ebm, \; \mbf{v}(t) = \mbf{0}, \; \mbf{L}(t) = \bbm \mbf{0} \\ \mbf{1} \ebm,
\end{equation}
where $\mbf{p}(t) = \bbm x(t) & y(t) & \theta(t) \ebm^T$ is the pose and $\dot{\mbf{p}}(t)$ is the pose rate.  In this case, the transition function is
\begin{equation}
\mbs{\Phi}(t,s) = \bbm \mbf{1} & (t-s) \mbf{1} \\ \mbf{0} & \mbf{1} \ebm,
\end{equation}
which can be used to construct $\sdemat$ (or $\sdemat^{-1}$ directly).  As we will be doing a STEAM example, we will constrain the first trajectory state to be $\mbf{x}(t_0) = \mbf{0}$ and so will have no need for $\primean_0$ and $\pricov_0$.  For $\ik=1\ldots \iK$, we have $\mbf{v}_\ik = \mbf{0}$ and
\begin{equation}
\mbf{Q}_\ik = \bbm \frac{1}{3} \Delta t_\ik^3 \mbf{Q}_C & \frac{1}{2} \Delta t_\ik^2 \mbf{Q}_C \\ \frac{1}{2} \Delta t_\ik^2 \mbf{Q}_C & \Delta t_\ik \mbf{Q}_C \ebm,
\end{equation}
with $\Delta t_\ik = t_\ik - t_{\ik-1}$.  The inverse blocks are
\begin{equation}
\mbf{Q}_\ik^{-1} = \bbm 12 \Delta t_\ik^{-3} \mbf{Q}_C^{-1} & -6 \Delta t_\ik^{-2} \mbf{Q}_C^{-1} \\ -6 \Delta t_\ik^{-2} \mbf{Q}_C^{-1} & 4 \Delta t_\ik^{-1} \mbf{Q}_C^{-1} \ebm,
\end{equation}
so we can build $\mbf{Q}^{-1}$ directly.  We now have everything we need to represent the prior: $\sdemat^{-1}$, $\mbf{Q}^{-1}$, and $\mbf{v}=\mbf{0}$, which can be used to construct $\pricov^{-1}$.
\end{tolxemerg}

We will also augment the trajectory with a set of $L$ landmarks, $\mbs{\ell}$, into a combined state, $\mbf{z}$, in order to consider the STEAM problem:
\begin{equation}
\mbf{z} = \bbm \mbf{x} \\ \mbs{\ell} \ebm, \quad \mbs{\ell} = \bbm \mbs{\ell}_1 \\ \vdots \\ \mbs{\ell}_L \ebm, \quad \mbs{\ell}_i = \bbm x_i \\ y_i \ebm.
\end{equation}
While others have folded velocity estimation into discrete-time, filter-based SLAM \citep{davison07} and even discrete-time, batch SLAM \citep{grau12}, we are actually proposing something more general than this:  the choice of prior tells us what to use for the trajectory states.   And, although we solve for the state at a discrete number of measurement times, our setup is based on an underlying continuous-time prior, meaning that we can query it at any time of interest in a principled way.  

\subsection{Nonlinear Constant-Velocity GP Prior}
\label{sec:exp_ntvpri}

\begin{tolxemerg}{}{2pt}
Since the main source of acceleration in this example is robot-oriented (the actuated wheels), we investigate the use of an alternate `constant-velocity' prior, with the white noise affecting acceleration in the robot body frame, $\mbsdot{\nu}(t) = \mbf{w}(t)$. 
This \emph{nonlinear} prior can be written as,
\begin{align}
\mbfdot{x}(t) &= \mbf{f}(\mbf{x}(t),\mbf{u}(t),\mbf{w}(t)) \nonumber \\
&= \bbm \mbf{0} & \mbf{R}_{IB}(\theta(t)) \\ \mbf{0} & \mbf{0} \ebm \mbf{x}(t) + \mbf{u}(t) + \bbm \mbf{0} \\ \mbf{1} \ebm \mbf{w}(t),
\end{align}
where
\begin{equation}
\mbf{R}_{IB}(\theta(t)) = \bbm \cos\theta(t) & -\sin\theta(t) & 0 \\ 
\sin\theta(t) &  \cos\theta(t) & 0 \\ 
0      &        0       & 1 \ebm
\end{equation}
is a rotation matrix between the inertial and robot body frame, and
the \emph{Markovian} state is $\mbf{x}(t) = \bbm \mbf{p}(t)^T \; \mbs{\nu}(t)^T \ebm^T$, where $\mbs{\nu}(t) = \bbm v(t) & u(t) & \omega(t) \ebm^T = \mbf{R}_{IB}(\theta(t))^T \mbfdot{p}(t)$ is the robot-oriented velocity (with longitudinal, latitudinal, and rotational components, respectively).
\end{tolxemerg}

Linearizing about an arbitrary operating point, $\opmean(t)$, the components $\mbf{F}(t)$, $\mbf{v}(t)$ and $\mbf{L}(t)$ from $\eqref{eq:ntvsde_lin}$ are straight-forward to derive.
Similarly, to the linear prior example described above, we will define the exogenous input $\mbf{u}(t) = \mbf{0}$; however, we note that $\mbf{v}(t) \neq \mbf{0}$ (see \eqref{eq:ntvsde_vlin}). 
Since expressions for $\mbs{\Phi}(t,s)$ and $\mbf{Q}_\ik$ are not obvious, we will rely on numerical integration for their evaluation.

\subsection{Measurement Models}

We will use two types of measurements:  range/bearing to landmarks (using a laser rangefinder) and wheel odometry (in the form of robot-oriented velocity).   The range/bearing measurement model takes the form
\begin{eqnarray}
\mbf{y}_{\ik i} & = & \mbf{g}_{\rm rb}( \mbf{x}(t_\ik), \mbs{\ell}_i ) + \mbf{n}_{\ik i} \nonumber \\
& = & \bbm \sqrt{(x_i -x(t_\ik) )^2 + (y_i-y(t_\ik))^2} \, \\ \mbox{atan2}(y_i-y(t_\ik),x_i-x(t_\ik)) \, \ebm + \mbf{n}_{\ik i}.
\end{eqnarray}
The wheel odometry measurement model gives the longitudinal and rotational speeds of the robot, taking the form
\begin{equation}
\mbf{y}_\ik = \mbf{g}_{\rm wo}( \mbf{x}(t_\ik) ) + \mbf{n}_\ik = \bbm \cos\theta(t_\ik) & \sin\theta(t_\ik) & 0 \\ 0 & 0 & 1 \ebm \hspace{-0.08cm} \mbfdot{p}(t_\ik) + \mbf{n}_\ik,
\end{equation}
for the LTI SDE prior, and
\begin{equation}
\mbf{y}_\ik = \mbf{g}_{\rm wo}( \mbf{x}(t_\ik) ) + \mbf{n}_\ik = \bbm 1 & 0 & 0 \\ 0 & 0 & 1 \ebm \mbs{\nu}(t_\ik) + \mbf{n}_\ik,
\end{equation}
for the NTV SDE prior.  
Note that in both cases the velocity information is extracted easily from the state since we are estimating it directly.
The Jacobians with respect to the state, $\mbf{x}(t)$, are straightforward to derive.

\subsection{Exploiting Sparsity}

\begin{figure}
	\includegraphics[width=\columnwidth]{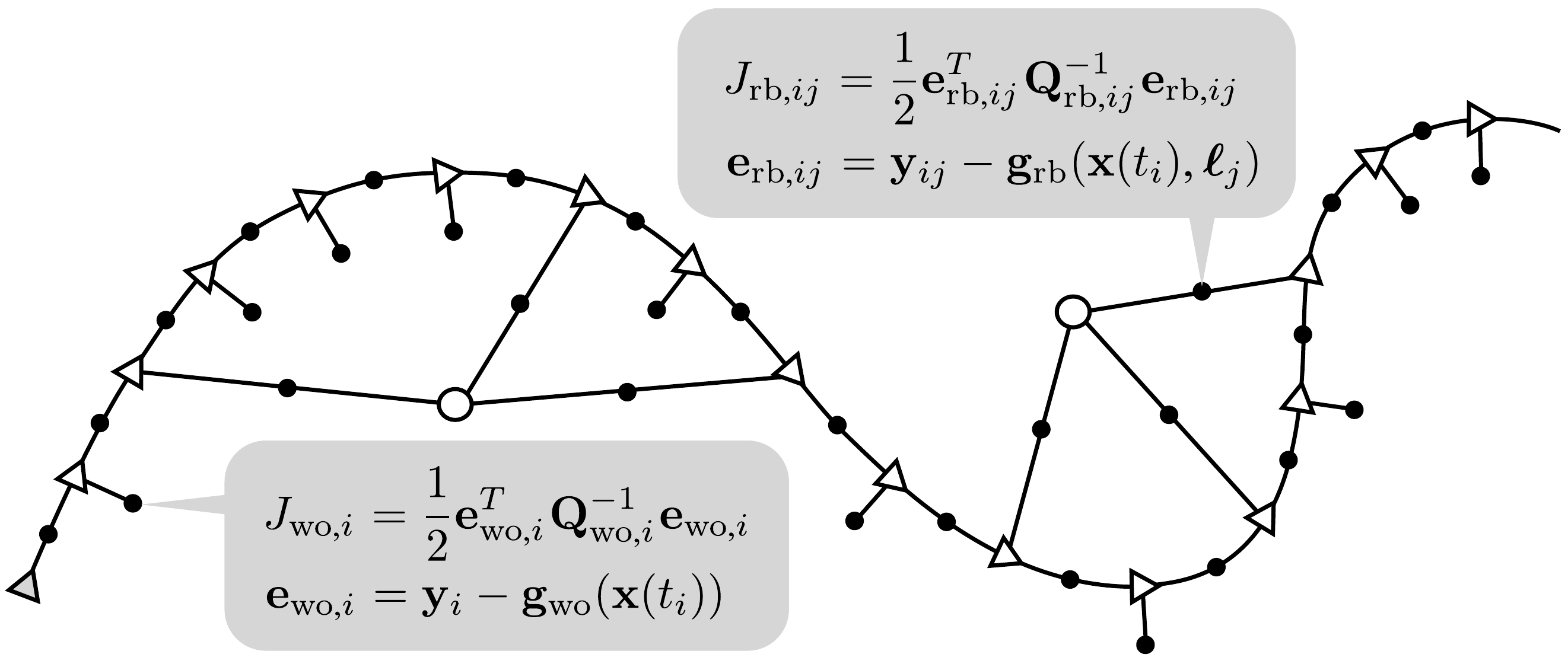}
	\caption{Factor-graph representation of our STEAM problem.  There are factors (black dots) for (i) the prior (binary), (ii) the landmark measurements (binary), and (iii) the wheel odometry measurements (unary).  Triangles are {\em trajectory states} (position and velocity, for this prior); the first trajectory state is locked.  Hollow circles are landmarks.}
	\label{fig:steam}
\end{figure}

\begin{tolxemerg}{}{4pt}
Figure~\ref{fig:steam} shows an illustration of the STEAM problem we are considering.   In terms of linear algebra, at each iteration we need to solve a linear system of the form
\begin{equation}
\label{eq:steam}
\underbrace{\bbm \mbf{W}_{xx} & \mbf{W}_{\ell x}^T \\ \mbf{W}_{\ell x} & \mbf{W}_{\ell\ell} \ebm}_{\mbf{W}} \underbrace{\bbm \delta\mbf{x}^\star \\ \delta\mbs{\ell}^\star \ebm}_{\delta\mbf{z}^\star} = \underbrace{\bbm \mbf{b}_x \\ \mbf{b}_{\ell} \ebm}_{\mbf{b}},
\end{equation}
which retains exploitable structure despite introducing landmarks to the state \citep{brown58}. In particular, $\mbf{W}_{xx}$ is block-tridiagonal (due to our GP prior) and $\mbf{W}_{\ell\ell}$ is block-diagonal; the sparsity of the off-diagonal block, $\mbf{W}_{\ell x}$, depends on the specific landmark observations.  We reiterate the fact that if we marginalize out the $\dot{\mbf{p}}(t_\ik)$ (or $\mbs{\nu}(t_\ik)$) variables and keep only the $\mbf{p}(t_\ik)$ variables to represent the trajectory, the $\mbf{W}_{xx}$ block becomes dense (for these priors); this is precisely the approach of \citet{tong_ijrr13b}.
\end{tolxemerg}

To solve~\eqref{eq:steam} efficiently, we can begin by either exploiting the sparsity of $\mbf{W}_{xx}$ or of $\mbf{W}_{\ell\ell}$.  Since each trajectory variable represents a unique measurement time (range/bearing or odometry), there are potentially a lot more trajectory variables than landmark variables, $L \ll M$,  so we will exploit $\mbf{W}_{xx}$.  

We use a sparse (lower-upper) Cholesky decomposition:
\begin{equation}
\underbrace{\bbm \mbf{V}_{xx} & \mbf{0} \\ \mbf{V}_{\ell x} & \mbf{V}_{\ell\ell} \ebm}_{\mbf{V}} \underbrace{\bbm \mbf{V}_{xx}^T & \mbf{V}_{\ell x}^T \\ \mbf{0} & \mbf{V}_{\ell\ell}^T \ebm}_{\mbf{V}^T} = \underbrace{\bbm \mbf{W}_{xx} & \mbf{W}_{\ell x}^T \\ \mbf{W}_{\ell x} & \mbf{W}_{\ell\ell} \ebm}_{\mbf{W}} 
\end{equation}
We first decompose $\mbf{V}_{xx}\mbf{V}_{xx}^T = \mbf{W}_{xx}$, which can be done in $O(\iK)$ time owing to the block-tridiagonal sparsity.  The resulting $\mbf{V}_{xx}$ will have only the main block-diagonal and the one below it non-zero.  We can then solve $\mbf{V}_{\ell x} \mbf{V}_{xx}^T = \mbf{W}_{\ell x}$ for $\mbf{V}_{\ell x}$ in $O(L \iK)$ time.  Finally, we decompose $\mbf{V}_{\ell\ell} \mbf{V}_{\ell\ell}^T = \mbf{W}_{\ell\ell} - \mbf{V}_{\ell x} \mbf{V}_{\ell x}^T$, which we can do in $O(L^3 + L^2\iK)$ time.  This completes the decomposition in $O(L^3 +L^2\iK)$ time.
We then perform the standard forward-backward passes, ensuring to exploit the sparsity:  first solve $\mbf{V} \mbf{d} = \mbf{b}$ for $\mbf{d}$, then $\mbf{V}^T \, \delta\mbf{z}^\star = \mbf{d}$ for $\delta\mbf{z}^\star$, both in $O(L^2 +L\iK)$ time.   Note, this approach does not marginalize out any variables during the solve, as this can ruin the sparsity (i.e., we avoid inverting $\mbf{W}_{xx}$).  The whole solve is $O(L^3 +L^2\iK)$.

At each iteration, we update the state, $\bar{\mbf{z}} \leftarrow \bar{\mbf{z}} + \delta\mbf{z}^\star$, and iterate to convergence.  Finally, we query the trajectory at $\iQry$ other times of interest using the GP interpolation discussed earlier.  The whole procedure is then $O(L^3 +L^2\iK + \iQry)$, including the extra queries.

Due to the addition of landmarks, the cost of a STEAM problem must be either be of order $O(L^3 + L^2\iK + \iQry)$ or $O(\iK^3 + \iK^2L + \iQry)$, depending on the way we choose to exploit $\mbf{W}_{xx}$ or $\mbf{W}_{\ell\ell}$.
The state reduction scheme presented in Section~\ref{sec:measinterp} becomes very attractive when both the number of measurements and landmarks are very high;
estimating the state at $\iKf$ \emph{keytimes}, $\iKf < \iK$ , and exploiting $\mbf{W}_{\ell\ell}$, the procedure becomes order $O(\iKf^3 + \iKf^2L + \iK + \iQry)$, which is the same complexity as a traditional discrete-time SLAM problem (with the addition of the $\iQry$ query times).

\subsection{Experiment}
\label{sec:experiment_subsec}

\begin{figure*}
	\subfigure[{\em GP-Traj-Sparse-LTI} trajectory sample.]{
		\includegraphics[width=0.49\textwidth]{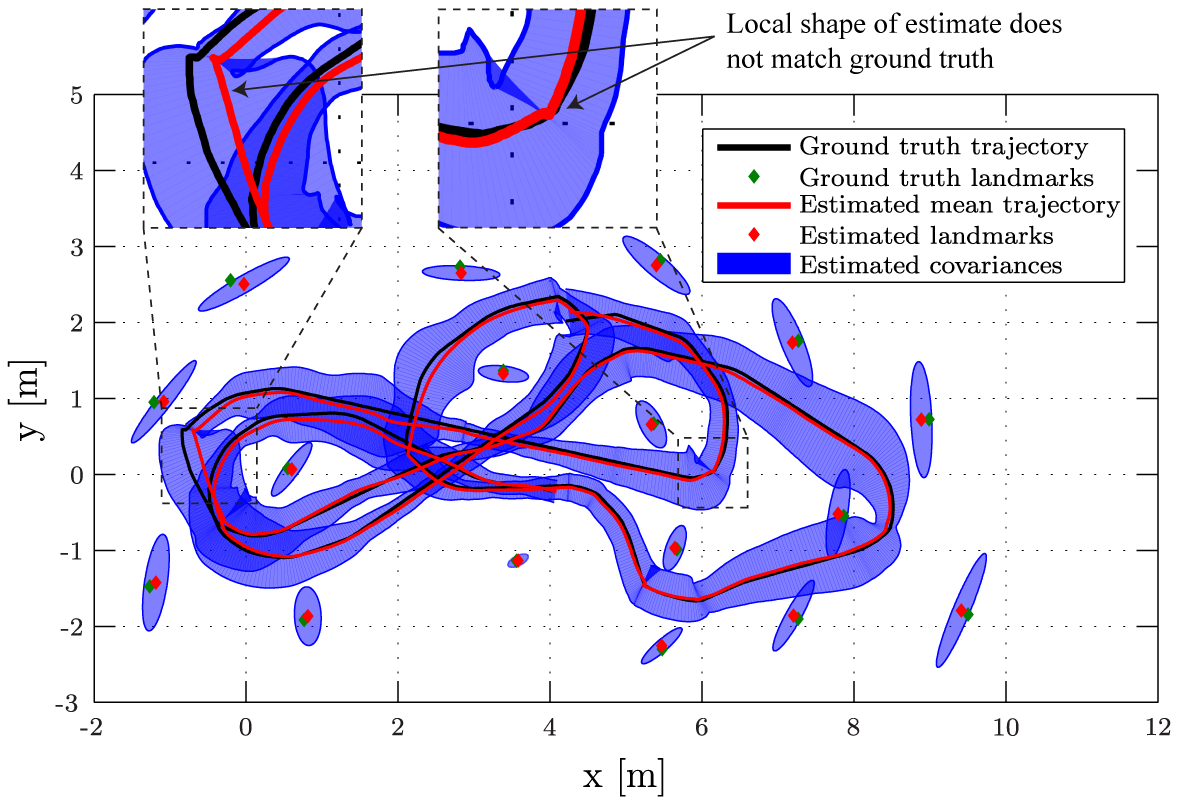}
		\label{fig:trajplot_linear}}
	\subfigure[{\em GP-Traj-Sparse-NTV} trajectory sample.]{
		\includegraphics[width=0.49\textwidth]{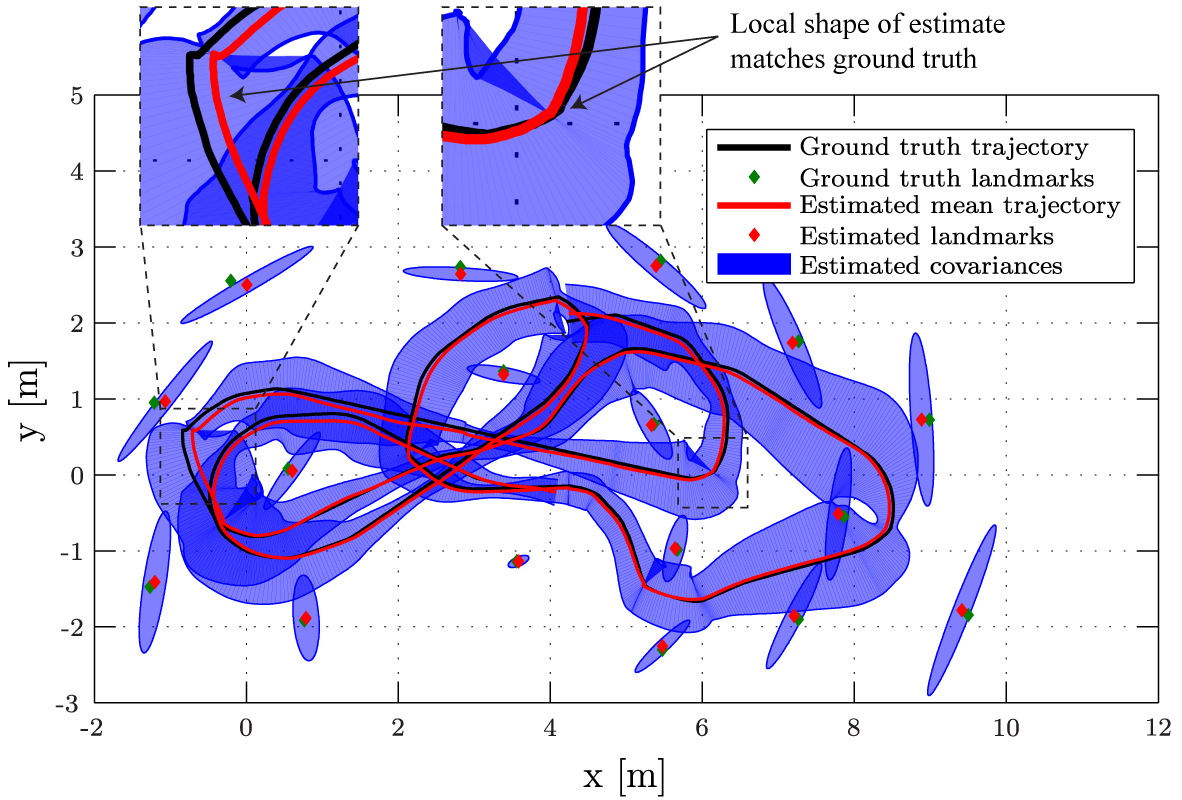}
		\label{fig:trajplot_nonlinear}}
	\caption{The smooth and continuous trajectories and $3\sigma$ covariance envelope estimates produced by the {\em GP-Traj-Sparse} estimators (both linear and nonlinear) over a short segment of the dataset.} 	
	\label{fig:trajectory_plot}
\end{figure*}

For experimental validation, we employed the same mobile robot dataset as used by \citet{tong_ijrr13b}.  This dataset consists of a mobile robot equipped with a laser rangefinder driving in an indoor, planar environment amongst a forest of $17$ plastic-tube landmarks.  The odometry and landmark measurements are provided at a rate of 1Hz, and additional trajectory queries are computed at a rate of 10Hz after estimator convergence.  Ground-truth for the robot trajectory and landmark positions is provided by a Vicon motion capture system.

\begin{tolxemerg}{}{2pt}
We implemented three estimators for comparison.  The first was the algorithm described by \citet{tong_ijrr13b}, {\em GP-Pose-Dense}, the second was a naive version of our estimator, {\em GP-Traj-Dense}, based on the LTI SDE prior described in Section~\ref{sec:exp_ltipri}, but did not exploit sparsity, and the third was a full implementations of our estimator, {\em GP-Traj-Sparse}, that exploited the sparsity structure as described in this paper. The final estimator has two variants, {\em GP-Traj-Sparse-LTI} and {\em GP-Traj-Sparse-NTV}, based on the LTI SDE and NTV SDE priors described in Sections~\ref{sec:exp_ltipri} and \ref{sec:exp_ntvpri}, respectively;
as the first two estimators, {\em GP-Pose-Dense} and {\em GP-Traj-Dense}, are only of interest with regard to computational performance, they each implement only the LTI `constant-velocity' prior.
\end{tolxemerg}

For this experiment, we obtained $\mbf{Q}_C$ for both the LTI and NTV priors by modelling it as a diagonal matrix and taking the data-driven training approach using log marginal likelihood (with ground-truth measurements) described in Section~\ref{sec:hyper_training}.

\begin{tolxemerg}{300}{4pt}
Though the focus of the exactly sparse Gaussian process priors is to demonstrate the significant reductions in computational cost, we provide Figure~\ref{fig:trajectory_plot} to illustrate the smooth trajectory estimates we obtained from the continuous-time formulation.  While the algorithms differed in their number of degrees of freedom and types of their estimated states, their overall accuracies were similar for this dataset.
The {\em GP-Traj-Sparse-NTV} algorithm differed slightly from the others; qualitatively, we found that the trajectory estimated by the {\em GP-Traj-Sparse-NTV} algorithm more accurately matched the local shape of the ground-truth on many occasions, such as the ones highlighted by the insets in Figure~\ref{fig:trajectory_plot}. Also, it is clear from the plotted $3\sigma$ covariance envelope that the estimate from the {\em GP-Traj-Sparse-NTV} algorithm tends to be more uncertain.
\end{tolxemerg}

\subsubsection{Computational Cost}

\begin{figure*}
	\subfigure[Kernel matrix construction time.]{
		\includegraphics[width=0.49\textwidth]{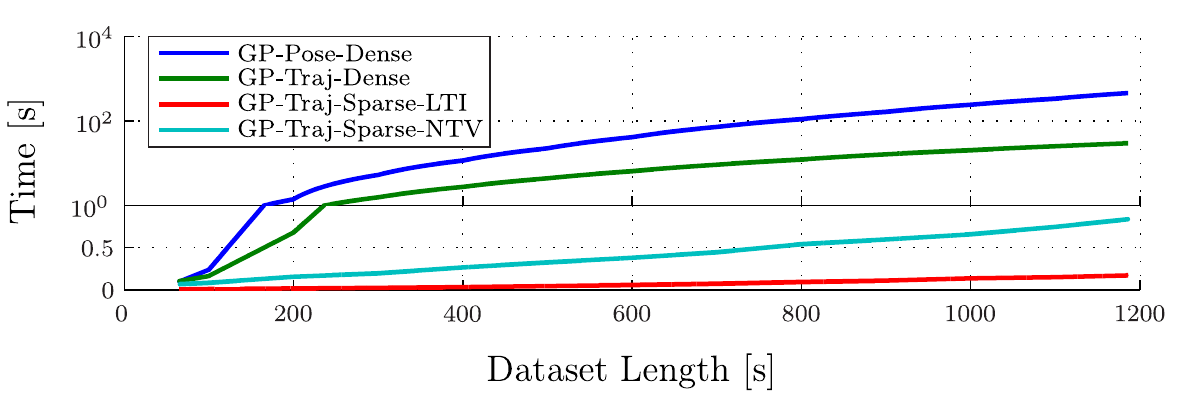}
		\label{fig:timing_kernel}}
	\subfigure[Optimization time per iteration.]{
		\includegraphics[width=0.49\textwidth]{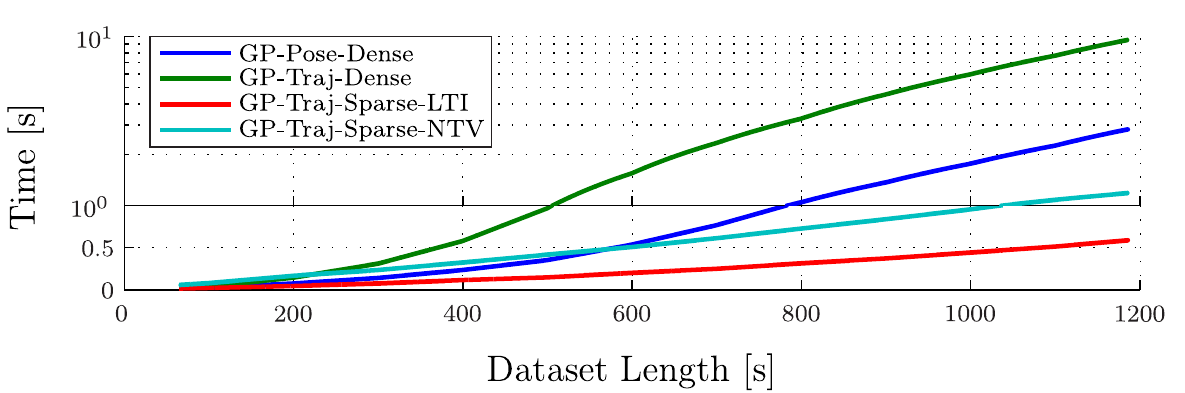}
		\label{fig:timing_optimization}}
	\subfigure[Interpolation time per additional query time.]{
		\includegraphics[width=0.49\textwidth]{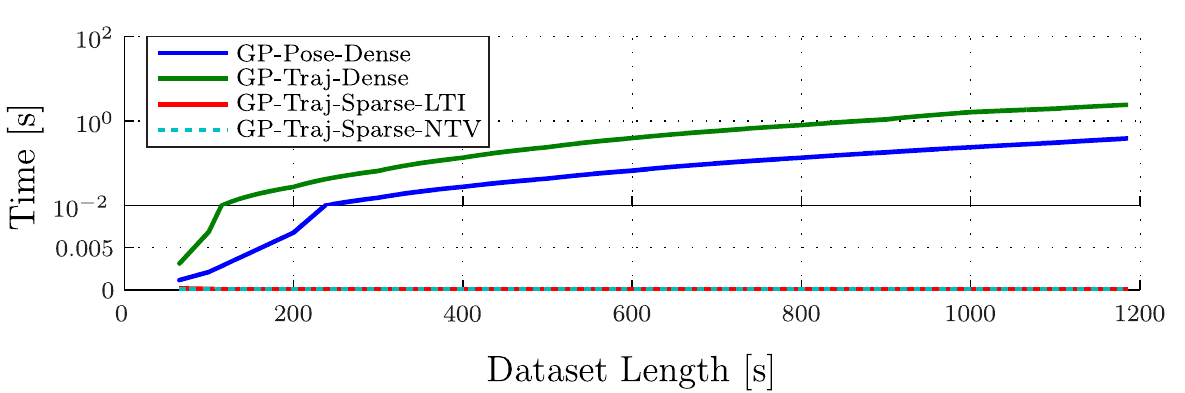}
		\label{fig:timing_interpolation}}
	\subfigure[Total computation time.]{
		\includegraphics[width=0.49\textwidth]{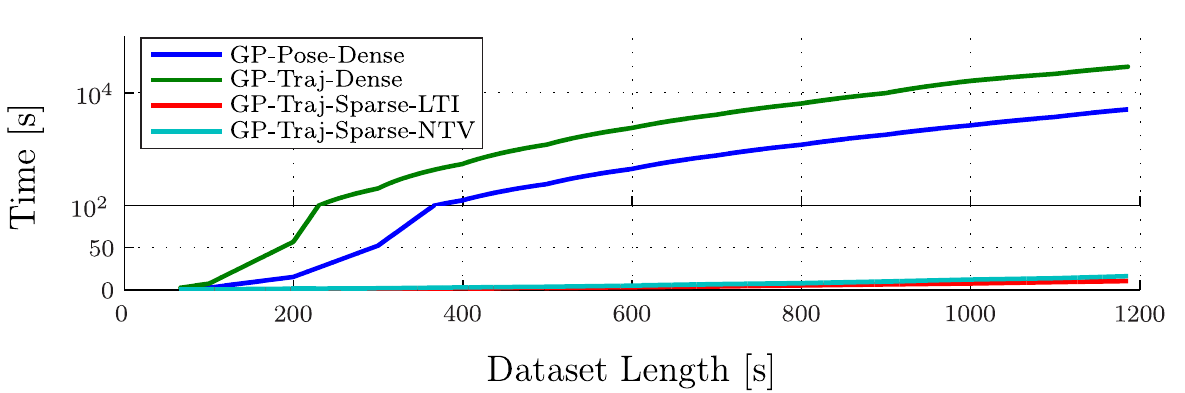}
		\label{fig:timing_total}}
	\caption{Plots comparing the compute time (as a function of trajectory length) for the {\em GP-Pose-Dense} algorithm described by \citet{tong_ijrr13b} and three versions of our approach: {\em GP-Traj-Dense} (does not exploit sparsity) and the two {\em GP-Traj-Sparse} variants (exploit sparsity).  The plots confirm the predicted computational complexities of the various methods; notably, the {\em GP-Traj-Sparse} estimators have linear cost in trajectory length.  Please note the change from a linear to a log scale in the upper part of each plot.}
	\label{fig:timing_plots}
\end{figure*}

\begin{tolxemerg}{}{2pt}
To evaluate the computational savings of exploiting an exactly sparse GP prior, we implemented all algorithms in Matlab with a 2.4GHz i7 processor and 8GB of 1333MHz DDR3 RAM and timed the computation for segments of the dataset of varying lengths.  These results are shown in Figure~\ref{fig:timing_plots}, where we provide the computation time for the individual operations that benefit most from the sparse structure, as well as the overall processing time.  
\end{tolxemerg}

We see that the {\em GP-Traj-Dense} algorithm is much slower than the original {\em GP-Pose-Dense} algorithm of \citet{tong_ijrr13b}. This is because we have reintroduced the velocity part of the state, thereby doubling the number of variables associated with the trajectory.   However, once we start exploiting the sparsity with the {\em GP-Traj-Sparse} methods, the increase in number of variables pays off.

For the {\em GP-Traj-Sparse} methods,  we see in Figure~\ref{fig:timing_kernel} that the kernel matrix construction was linear in the number of estimated states.  This can be attributed to the fact that we constructed the sparse $\pricov^{-1}$ directly.  As predicted, the optimization time per iteration was also linear in Figure~\ref{fig:timing_optimization}, and the interpolation time per additional query was constant regardless of state size in Figure~\ref{fig:timing_interpolation}.  Finally, Figure~\ref{fig:timing_total} shows that the total compute time was also linear.

The additional cost of the {\em GP-Traj-Sparse-NTV} algorithm over the {\em GP-Traj-Sparse-LTI} algorithm in kernel construction time is due to the linearization and numerical integration of the prior mean and covariance. The optimization time of the {\em GP-Traj-Sparse-NTV} algorithm is also affected because the kernel matrix must be reconstructed from a new linearization of the prior during every optimization iteration. The {\em GP-Traj-Sparse-NTV} algorithm also incurs some numerical integration cost in interpolation time, but it is constant and very small.

We also note that the number of iterations for optimization convergence varied for each algorithm.  In particular, we found that the {\em GP-Traj-Sparse}  implementations converged in fewer iterations than the other implementations due to the fact that we constructed the inverse kernel matrix directly, which resulted in greater numerical stability.  The {\em GP-Traj-Sparse}  approaches clearly outperform the other algorithms in terms of computational cost.

\subsubsection{Increasing Nonlinearity}

\begin{figure*}
	\subfigure[RMS errors using odometry measurements.]{
		\includegraphics[width=0.48\textwidth]{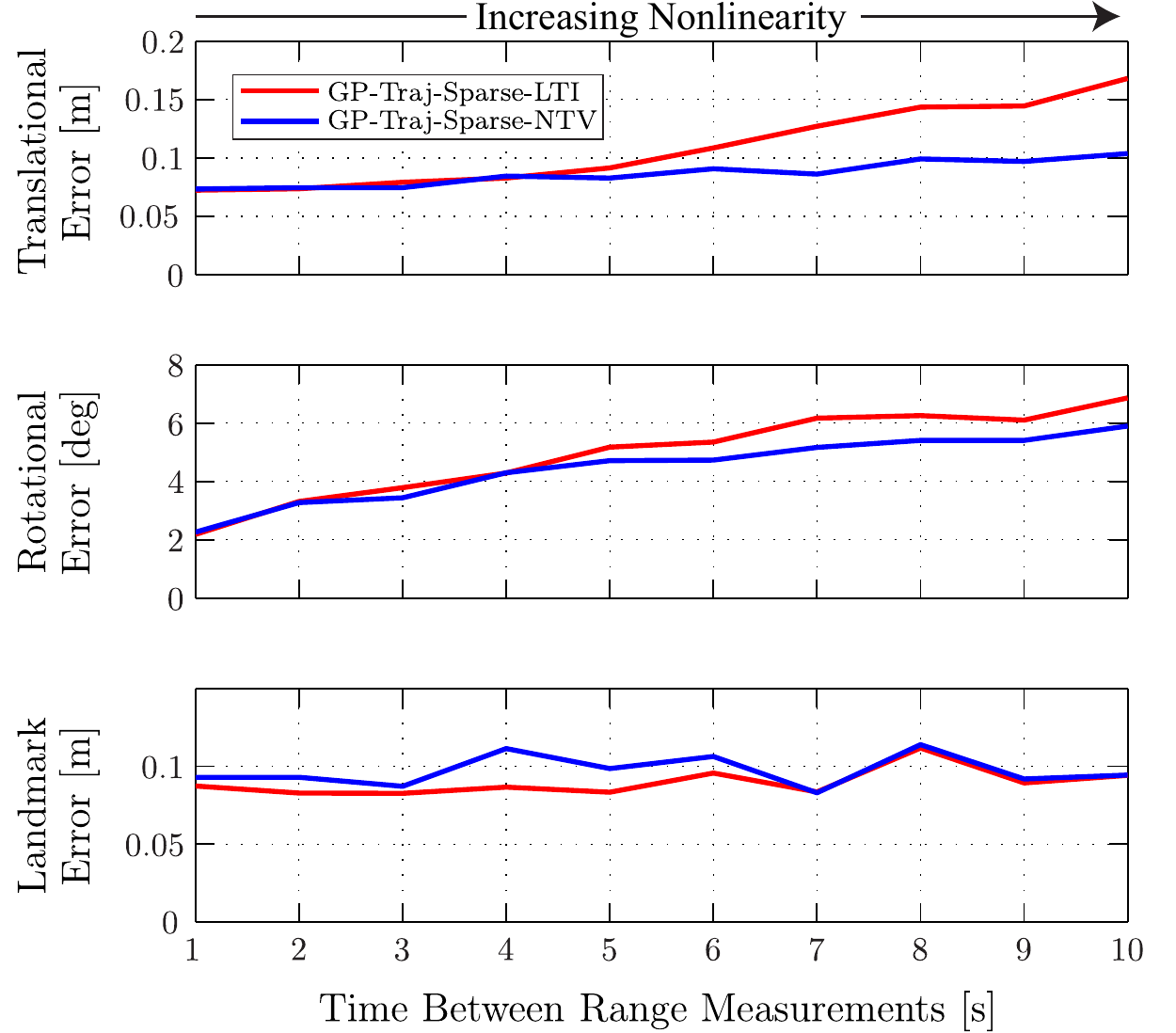}
		\label{fig:error_with_odom}}
	\hfill
	\subfigure[RMS errors without using odometry measurements.]{
		\includegraphics[width=0.48\textwidth]{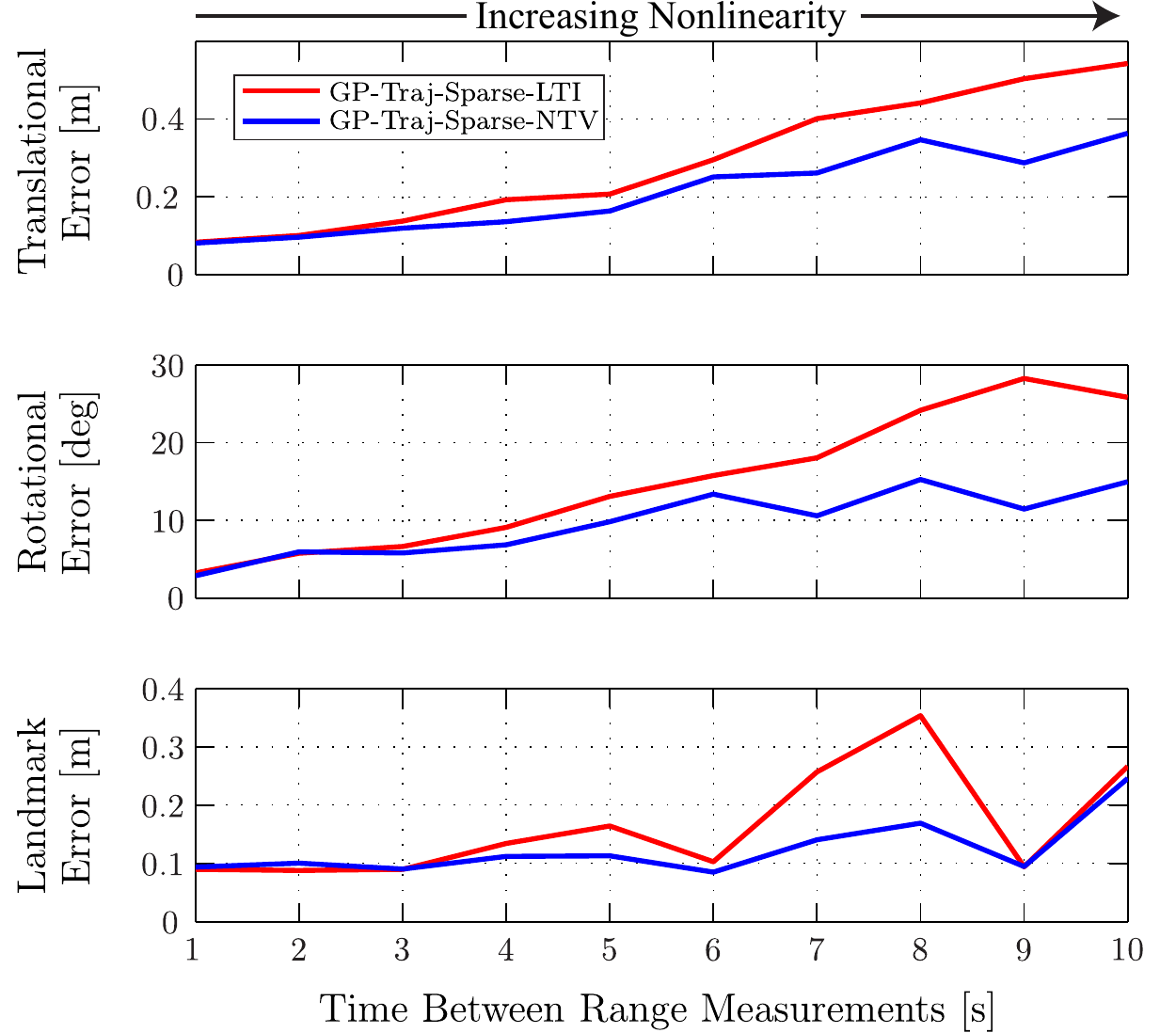}
		\label{fig:error_without_odom}}
	\caption{Plots comparing use of the {\em GP-Traj-Sparse-LTI} and {\em -NTV} algorithms for an increasingly nonlinear problem; the dataset was made more nonlinear by varying the interval between available range measurements. The results in \protect\subref{fig:error_with_odom} used the odometry measurements available at 1\,Hz, while \protect\subref{fig:error_without_odom} was made to be even more nonlinear by excluding the use of odometry measurements. The plots show that for a small interval between range measurements, both estimators perform similarly; as the interval was increased, the estimate provided by the nonlinear prior is consistently better in both translation and angular error. Notably, when odometry is available, both estimators are able to achieve a similar orientation performance; the negative effect of removing odometry measurements is more prominent on the linear prior estimator.  }
	\label{fig:nonlinearity_errors}
\end{figure*}

\begin{figure*}
	\subfigure[With odometry measurements.]{
		\includegraphics[width=0.49\textwidth]{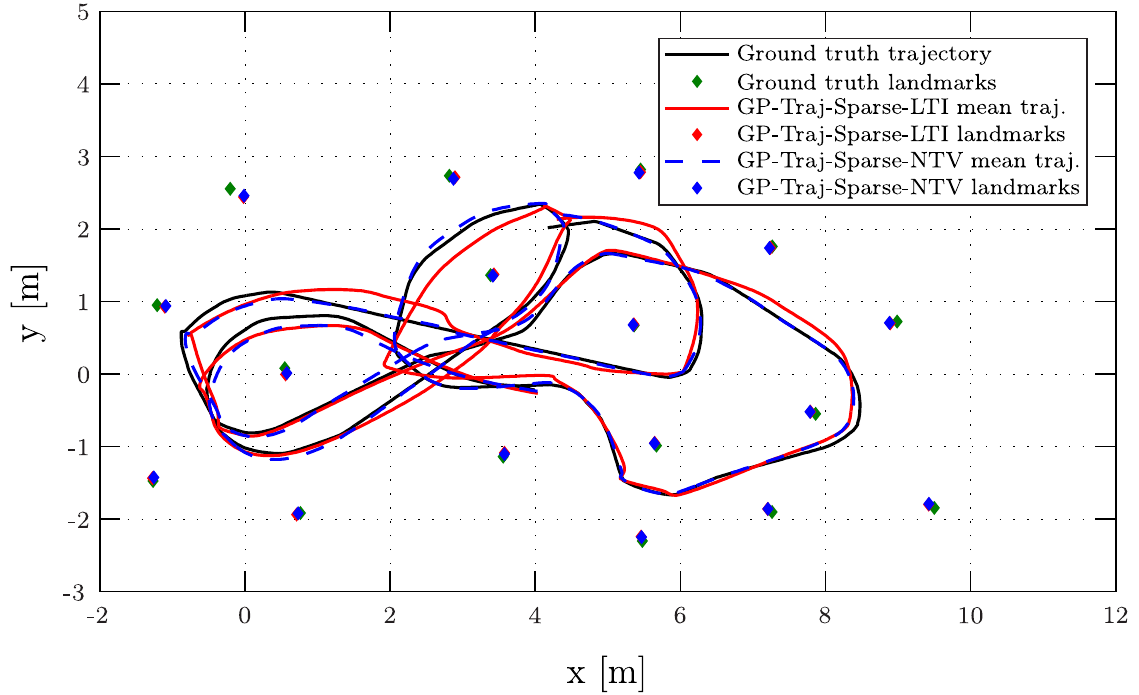}
		\label{fig:trajplot_with_odom}}
	\hfill
	\subfigure[Without odometry measurements.]{
		\includegraphics[width=0.49\textwidth]{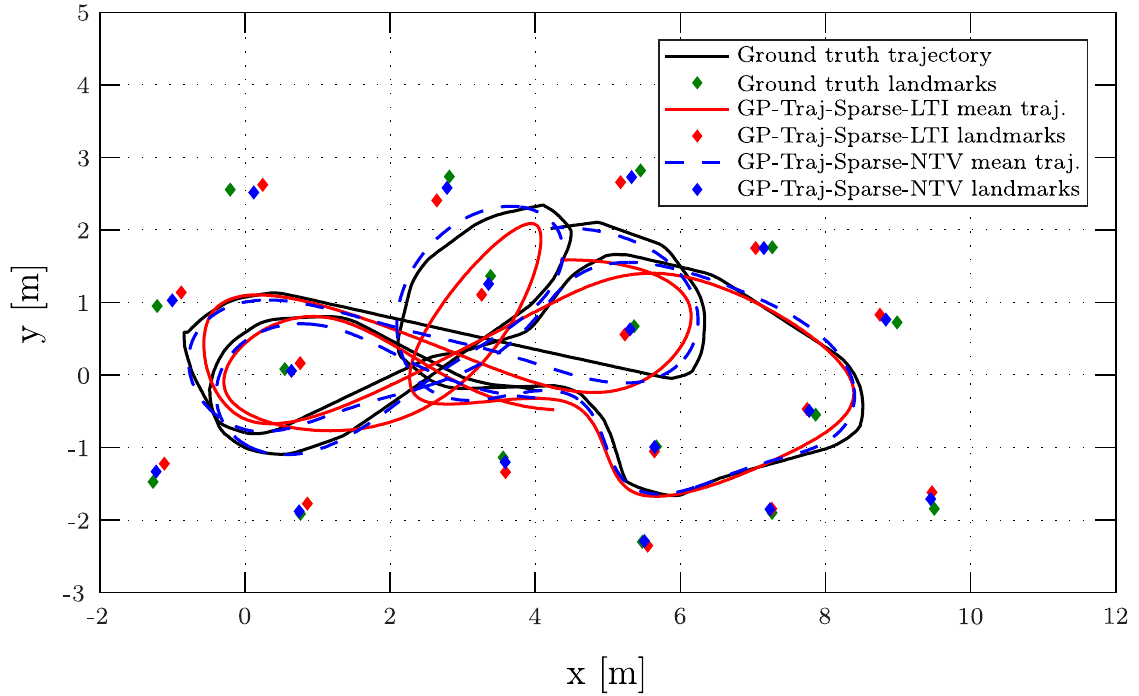}
		\label{fig:trajplot_without_odom}}
	\caption{Plots showing the {\em GP-Traj-Sparse-LTI} and {\em -NTV} estimates for the same small trajectory subsection as Figure~\ref{fig:trajectory_plot}, with an interval between range measurements of 7 seconds. Results in \protect\subref{fig:trajplot_with_odom} used odometry measurements, while \protect\subref{fig:trajplot_without_odom} did not. }
	\label{fig:nonlinearity_trajplot}
\end{figure*}

In a problem with fairly accurate and high-rate measurements, both the {\em GP-Traj-Sparse-LTI} and {\em GP-Traj-Sparse-NTV} estimators provide similar accuracy. 
In order to expose the benefit of a nonlinear prior based on the expected motion of the vehicle, we increase the nonlinearity of the problem by reducing measurement frequency. 

\begin{tolxemerg}{}{4pt}
The result of varying range measurement frequency, with and without the use of odometry measurements, is shown in Figure~\ref{fig:nonlinearity_errors}.
In general, it is clear that as the interval between range measurements is increased, the {\em GP-Traj-Sparse-NTV} estimator is able to produce a more accurate estimate of the continuous-time pose (translation and rotation) than the {\em GP-Traj-Sparse-LTI} estimator.
\end{tolxemerg}

In the case that the 1\,Hz odometry measurements are available, as seen in Figure~\ref{fig:error_with_odom}, the difference in the rotation estimates is small, because the {\em GP-Traj-Sparse-LTI} estimator has a good amount of information about its heading; however, in the case that the odometry measurements are unavailable, as seen in Figure~\ref{fig:error_without_odom}, the advantage of the nonlinear prior implemented by the {\em GP-Traj-Sparse-NTV} estimator is prominent with respect to the rotational estimate.

In order to gain some qualitative intuition about how the continuous-time pose estimates are affected by the reduction of measurements, Figure~\ref{fig:nonlinearity_trajplot} shows the trajectories for the same small subsection as presented in Figure~\ref{fig:trajectory_plot}; the estimates used an interval between range measurements of 7 seconds and are shown with and without the use of odometry measurements.
In both plots, it is clear that the {\em GP-Traj-Sparse-NTV} estimator matches the ground-truth more closely, as previously indicated by the error plots in Figure~\ref{fig:nonlinearity_errors}.

\section{Discussion and Future Work}
\label{sec:discussion}

It is worth elaborating on a few issues. The main reason that the $\mbf{W}_{xx}$ block is sparse in our approach, as compared to \citet{tong_ijrr13b}, is that we reintroduced velocity variables that had effectively been marginalized out.  This idea of reintroducing variables to regain exact sparsity has been used before by \citet{eustice06} in the delayed state filter and by \citet{walter07} in the extended information filter.  This is a good lesson to heed:  the underlying structure of a problem may be exactly sparse, but by marginalizing out variables it appears dense.  For us this means we need to use a Markovian trajectory state that is appropriate to our prior.

In much of mobile robotics, odometry measurements are treated more like inputs to the mean of the prior than pure measurements.  We believe this is a confusing thing to do as it conflates two sources of uncertainty:  the prior over trajectories and the odometry measurement noise.  In our framework, we have deliberately separated these two functions and believe this is easier to work with and understand.  We can see these two functions directly in Figure~\ref{fig:steam}, where the prior is made up of binary factors joining consecutive trajectory states, and odometry measurements are unary factors attached to some of the trajectory states (we could have used binary odometry factors but chose to set things up this way due to the fact that we were explicitly estimating velocity).  

While our analysis appears to be restricted to a small class of covariance functions, we have only framed our discussions in the context of robotics.  Recent developments from machine learning \citep{hartikainen10} and signal processing \citep{sarkka13} have shown that it is possible to generate other well-known covariance functions using a LTV SDE (some exactly and some approximately).  This means they can be used with our framework.  One example is the Mat\'{e}rn covariance family \citep{rasmussen06}, 
\begin{equation} %
\pricov_{\rm m}(t,t^\prime) = \sigma^2 \frac{2^{1-\nu}}{\Gamma(\nu)}\left( \frac{\sqrt{2\nu}}{\ell} |t-t^\prime|\right)^\nu \hspace{-0.1cm} K_\nu \left(  \frac{\sqrt{2\nu}}{\ell} |t-t^\prime| \right) \hspace{-0.05cm} \mbf{1}
\end{equation} %
where $\sigma$, $\nu$, $\ell >0$ are magnitude, smoothness, and length-scale parameters, $\Gamma(\cdot)$ is the gamma function, and $K_\nu(\cdot)$ is the modified Bessel function.  For example, if we let
\begin{equation}
\mbf{x}(t) = \bbm \mbf{p}(t) \\ \dot{\mbf{p}}(t) \ebm,
\end{equation}
with $\nu = p + \frac{1}{2}$ with $p=1$ and use the following SDE:
\begin{equation}
\dot{\mbf{x}}(t) = \bbm \mbf{0} & \mbf{1} \\  -\lambda^2 \mbf{1} & -2\lambda \mbf{1}  \ebm \mbf{x}(t) + \bbm \mbf{0} \\ \mbf{1} \ebm \mbf{w}(t),
\end{equation}
where $\lambda = \sqrt{2\nu}/\ell$ and $\mbf{w}(t) \sim \mathcal{GP}\left( \mbf{0}, \mbf{Q}_C \, \delta(t-t^\prime) \right)$ (our usual white noise) with power spectral density matrix,
\begin{equation}
\mbf{Q}_C = \frac{2\sigma^2 \pi^{\frac{1}{2}} \lambda^{2p+1} \Gamma(p+1)}{\Gamma(p+\frac{1}{2})} \mbf{1},
\end{equation}
then we have that $\mbf{p}(t)$ is distributed according to the Mat\'{e}rn covariance family: $\mbf{p}(t) \sim \mathcal{GP}(\mbf{0}, \pricov_{\rm m}(t,t^\prime) )$ with $p=1$.  Another way to look at this is that passing white noise through LTV SDEs produces particular coloured-noise priors (i.e., not flat across all frequencies).   

In terms of future work, we plan to incorporate the dynamics (i.e., kinematics plus Newtonian mechanics) of a robot platform into the GP priors;  real sensors do not move arbitrarily through the world as they are usually attached to massive robots and this serves to constrain the motion.  Another idea is to incorporate {\em latent force models} into our GP priors (e.g., see \citet{alvarez09} or \citet{hartikainen12}).  We also plan to look further at the sparsity of STEAM and integrate our work with modern solvers to tackle large-scale problems; this should allow us to exploit more than just the primary sparsity of the problem and do so in an online manner.

\section{Conclusion}
\label{sec:conclusion}

\begin{tolxemerg}{}{4pt}
We have considered continuous-discrete estimation problems where a trajectory is viewed as a one-dimensional {\em Gaussian process} (GP), with time as the independent variable and measurements acquired at discrete times.  Querying the trajectory can be viewed as nonlinear, GP regression.   Our main contribution in this paper is to show that this querying can be accomplished very efficiently.  To do this, we exploited the Markov property of our GP priors (generated by nonlinear, time-varying stochastic differential equations driven by white noise) to construct an inverse kernel matrix that is sparse.  This makes it fast to solve for the state at the measurement times (as is commonly done in vision and robotics) but also at any other time(s) of interest through GP interpolation.  Other implications of this sparsity were discussed with respect to hyperparameter training, and including measurements at query times.  We also considered a slight generalization of the SLAM problem, {\em simultaneous trajectory estimation and mapping} (STEAM), which makes use of a continuous-time trajectory prior and allows us to query the state at any time of interest in an efficient manner. We hope this paper serves to deepen the connection between classical state estimation theory and recent machine learning methods by viewing batch estimation through the lens of Gaussian process regression.
\end{tolxemerg}

\begin{acknowledgements}
Thanks to Dr.\ Alastair Harrison at Oxford who asked the all-important question:  {\em how can the GP estimation approach \citep{tong_ijrr13b} be related to factor graphs?}  This work was supported by the Canada Research Chair Program, the Natural Sciences and Engineering Research Council of Canada, and the Academy of Finland.
\end{acknowledgements}

\bibliographystyle{apalike} 

\small{

}

\end{document}